\def\eqref#1{equation~\ref{#1}}
\def\1{\bm{1}}
\DeclareMathAlphabet{\mathsfit}{\encodingdefault}{\sfdefault}{m}{sl}
\SetMathAlphabet{\mathsfit}{bold}{\encodingdefault}{\sfdefault}{bx}{n}
\newcommand{\E}{\mathbb{E}}
\DeclareMathOperator*{\argmax}{arg\,max}
\providecommand*{\barvee}{%
  \mathbin{%
    \mathpalette\@barvee{}%
  }%
}
\newcommand*{\@barvee}[2]{%
  \sbox0{$#1\veebar\m@th$}%
  \sbox2{%
    \hbox to \wd0{%
      \hss
      \resizebox{1.05\wd0}{\height}{$#1-\m@th$}%
      \hss
    }%
  }%
  \sbox4{%
    \resizebox{\wd0}{.7\ht0}{$#1\vee\m@th$}%
  }%
  \sbox6{$#1\vcenter{}$}
  \ht2=\ht6 %
  \vbox to \ht0{%
    \copy2 %
    \vss
    \copy4 %
  }%
}
\newcommand{\state}{\mathcal{S}}
\newcommand{\action}{\mathcal{A}}
\newcommand{\dynamics}{P}
\newcommand{\reward}{R}
\newcommand{\rmax}{\reward_{\text{MAX}}}
\newcommand{\rmin}{\reward_{\text{MIN}}}
\newcommand{\goals}{\mathcal{G}}
\renewcommand{\v}{V}
\newcommand{\q}{Q}
\newcommand{\vpi}{V^\pi}
\newcommand{\qpi}{Q^\pi}
\newcommand{\pistar}{\pi^*}
\newcommand{\vstar}{V^*}
\newcommand{\qstar}{Q^*}
\newcommand{\tasks}{\mathcal{M}}
\newcommand{\rbar}{\bar{\reward}}
\newcommand{\rbarmin}{\rbar_{\text{MIN}}}
\newcommand{\pibar}{\bar{\pi}}
\newcommand{\pibarstar}{\bar{\pi}^*}
\newcommand{\vbar}{\bar{\v}}
\newcommand{\vbarpi}{\bar{\v}^{\pibar}}
\newcommand{\vstarbar}{\bar{\v}^*}
\newcommand{\vstarbara}{\Tilde{\bar{\v}}}
\newcommand{\qbar}{\bar{\q}}
\newcommand{\qbarpi}{\bar{\q}^{\pibar}}
\newcommand{\qstarbar}{\bar{\q}^{*}}
\newcommand{\goalq}{\bar{\mathcal{\q}}^*}
\newcommand{\gstar}{G^*_{s, g, a}}
\providecommand*{\barvee}{%
  \mathbin{%
    \mathpalette\@barvee{}%
  }%
}
\newtheorem{definition}{Definition}
\newtheorem{theorem}{Theorem}
\newlength{\strutheight}
\title{World Value Functions: Knowledge Representation for Learning and Planning}
\author{
    Geraud Nangue Tasse, Benjamin Rosman, Steven James\\
}
\begin{document}

\maketitle

\begin{abstract}

We propose world value functions (WVFs), a type of goal-oriented general value function that represents how to solve not just a given task, but any other goal-reaching task in an agent's environment.
This is achieved by equipping an agent with an internal goal space defined as all the world states where it experiences a terminal transition.
The agent can then modify the standard task rewards to define its own reward function, which provably drives it to learn how to achieve all reachable internal goals, and the value of doing so in the current task.
We demonstrate two key benefits of WVFs in the context of learning and planning.
In particular, given a learned WVF, an agent can compute the optimal policy in a new task by simply estimating the task's reward function.
Furthermore, we show that WVFs also implicitly encode the transition dynamics of the environment, and so can be used to perform planning.
Experimental results show that WVFs can be learned faster than regular value functions, while their ability to infer the environment's dynamics can be used to integrate learning and planning methods to further improve sample efficiency.

\end{abstract}

\section{Introduction}
\label{sec:intro}

A grand challenge of artificial intelligence is to create general agents capable of solving a wide variety of tasks in the real world. 
To accomplish this, we require a general decision-making framework that models agents' interaction with the world,  and a sufficiently general representation to capture the knowledge agents acquire. Reinforcement learning (RL) \cite{sutton1998introduction} is one such framework and although it has made several major breakthroughs in recent years, ranging from robotics \citep{levine16} to board games \citep{silver17}, these agents are typically narrowly designed to solve only a single task. 

In RL, tasks are specified through a reward function from which the agent receives feedback. 
Most commonly, an agent represents its knowledge in the form of a value function, representing the sum of future rewards it expects to receive.
However, since the value function is directly tied to one single reward function (and hence task), it is definitionally insufficient for constructing agents capable of solving a wide range of tasks.

In this work, we seek to overcome this limitation by proposing \textit{world value functions} (WVFs), a goal-oriented knowledge representation that encodes how to solve not only the current task, but also any other goal-reaching task. 
In the literature, agents with such abilities are said to possess \textit{mastery} \cite{veeriah2018many}, and we prove that WVFs do, in fact, possess this property in deterministic environments.
Importantly, WVFs are a form of general value function \cite{sutton11} that can be learned from a single stream of experience; no additional information or modifications to the standard RL framework are required.

WVFs have several desirable properties, which we formally prove in the deterministic setting. In particular, we show that (i) given a learned WVF, any new task can be solved by estimating its reward function, which reduces the problem to supervised learning; and (ii) WVFs implicitly encode the dynamics of the world and can be used for model-based RL.
Experimental results in the Four Rooms domain \citep{sutton99} validate our theoretical findings, while demonstrating that not only can WVFs be learned faster than regular value functions, they can also be leveraged to perform Dyna-style planning \citep{sutton90} to improve sample efficiency.

\section{Preliminaries} \label{sec:bg}

We model an agent's environment as a Markov Decision Process (MDP) $(\mathcal{\state}, \action, \dynamics, \reward)$, where 
\begin{enumerate*}[label=(\roman*)]
  \item $\state$ is the state space,
  \item $\action$ is the action space,
  \item $\dynamics(s, a, s')$ are the transition dynamics of the world, and
  \item $\reward$ is a reward function, bounded by $[\rmin,\rmax]$, representing the task the agent must solve.
\end{enumerate*}
Note that in this work, we focus on environments with \textit{deterministic} dynamics, but put no restrictions on their complexity.

The agent's aim is to compute a \textit{policy} $\pi$ from $\state$ to $\action$ that optimally solves a given task.
This is often achieved by learning a value function that represents the expected return obtained under $\pi$ starting from state $s$: $\vpi(s) = \E^\pi \left[ \sum_{t=0}^{\infty} r(s_t, a_t, s_{t+1}) \right]$.
Similarly, the action-value function $\qpi(s, a)$ represents the expected return obtained by executing $a$ from $s$, and thereafter following $\pi$.
The optimal action-value function is given by $\qstar(s, a) = \max_\pi \qpi(s, a)$ for all states $s$ and actions $a$, and the optimal policy follows by acting greedily with respect to $\qstar$ at each state.

\section{World Value Functions} \label{sec:wvfs}

We now introduce \textit{world value functions} (WVFs), which provably encode how to reach all achievable goals.
We first define the internal goal space $\goals \subseteq \state$ of the agent as all states where it experiences a terminal transition.

Different from other goal-oriented approaches where goals are specified by the environment, here the goal an agent wishes to achieve is chosen by itself.
The agent's aim now is to simultaneously solve the current task, while also learning how to achieve its own internal goals. 
To do so, the agent can define its own goal-oriented reward function $\rbar$, which extends $\reward$ to penalise itself for achieving goals it did not intend to:
\[
\rbar(s, g, a, s^\prime) \coloneqq \begin{cases}
\rbarmin & \text{if } g \neq s \text{ and } s^\prime \text{ is absorbing }\\
\reward(s, a, s^\prime) &\text{otherwise},
\end{cases}
\]
where $\rbarmin$ is a large negative penalty that can be derived from the bounds of the reward function \cite{nangue2020boolean}.
Intuitively, the penalty $\rbarmin$ adds one bit of information to the agent's rewards, and we will later prove this is sufficient for the agent to learn how to achieve its internal goals in the current task.

The agent must now compute a \textit{world policy} $\pibar: \state \times \goals \to \text{Pr}(\action)$ that optimally reaches its internal goal states. Given a world policy $\pibar$, the corresponding WVF is defined as 
 $\qbarpi(s, g, a) \coloneqq \E_{s^\prime}^{\pibar} \left[ \rbar(s, g, a, s^\prime) + \vbarpi(s^\prime, g) \right]$, where $ \vbarpi(s, g) \coloneqq \E^{\pibar} \left[ \sum_{t=0}^{\infty} \rbar(s_t, g, a_t, s_{t+1}) \right]$.

Since the WVF satisfies the Bellman equations, $\qstarbar(s,g,a)$ can be learned using any suitable RL algorithm, such as Q-learning (see Algorithm~\ref{algo:qlearn}).

\subsection{Properties of World Value Functions}

While a learned WVF encodes the values of achieving all internal goals, it can still be used to solve the task in which it was learned.
Theorem~\ref{thm:1} below demonstrates that the current task's reward and value function can be recovered by simply maximising over goals:

\begin{theorem}
Let $M=(\mathcal{\state}, \action, \dynamics, \reward)$ be a deterministic task with optimal action-value function $\qstar$ and optimal world action-value function $\qstarbar$. 
Then for all $(s, a, s^\prime)$ in $\state \times \action \times \state$, we have
\begin{enumerate*}[label=(\roman*)]
    \item $\reward(s, a, s^\prime) = \max\limits_{g \in \goals} \rbar(s, g, a, s^\prime)$, and
    \item $\qstar(s, a) = \max\limits_{g \in \goals} \qstarbar(s, g, a)$.
\end{enumerate*}
\label{thm:1}
\qed
\end{theorem}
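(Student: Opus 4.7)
The plan is to prove the two parts in order, using (i) as a per-transition building block for (ii), and exploiting determinism plus the existence of terminal transitions to build explicit witnesses.

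For part (i), I would argue directly from the definition of $\rbar$. If $s^\prime$ is non-absorbing, then $\rbar(s,g,a,s^\prime) = \reward(s,a,s^\prime)$ for every $g \in \goals$, so the maximum is trivially $\reward(s,a,s^\prime)$. If $s^\prime$ is absorbing, then $s^\prime \in \goals$ by the definition of the internal goal space, so choosing $g = s^\prime$ gives $\rbar(s,s^\prime,a,s^\prime) = \reward(s,a,s^\prime)$; for any other $g \neq s^\prime$ the value is $\rbarmin$, which is dominated by $\reward(s,a,s^\prime)$ provided $\rbarmin \le \rmin$. Citing the construction of $\rbarmin$ from the reward bounds (as referenced in the paper) handles this.

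For part (ii), I would prove both inequalities. For the upper bound $\max_g \qstarbar(s,g,a) \le \qstar(s,a)$: fix any $g$ and let $\pibar^\ast_g$ be the goal-conditioned optimal world policy. Since part (i) gives $\rbar(s,g,a,s^\prime) \le \reward(s,a,s^\prime)$ pointwise, the return along any trajectory under $\rbar(\cdot,g,\cdot,\cdot)$ is at most the return under $\reward$. Therefore $\qstarbar(s,g,a)$ is upper bounded by the expected $\reward$-return of the (Markov) policy $a^\prime \mapsto \pibar^\ast_g(a^\prime \mid s^\prime, g)$ in the original MDP, which in turn is at most $\qstar(s,a)$.

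For the lower bound, I would construct an explicit witness using determinism. Let $\pi^\ast$ be an optimal deterministic policy for $\reward$; starting from $(s,a)$ it induces a unique trajectory which (under the implicit episodic assumption that optimal trajectories reach a terminal transition) ends at some absorbing state $g^\ast \in \goals$. Setting $g = g^\ast$, every transition along this trajectory satisfies the ``otherwise'' branch of $\rbar$, so $\rbar(\cdot, g^\ast, \cdot, \cdot) = \reward(\cdot,\cdot,\cdot)$ along the trajectory. Thus the non-stationary policy obtained by following $\pi^\ast$ achieves return $\qstar(s,a)$ in the world MDP conditioned on $g^\ast$, which yields $\qstarbar(s,g^\ast,a) \ge \qstar(s,a)$, and hence $\max_g \qstarbar(s,g,a) \ge \qstar(s,a)$.

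The main obstacle is the lower-bound step: it relies on the fact that an optimal $\reward$-trajectory actually terminates in some $g^\ast \in \goals$, which uses both determinism and the episodic structure implicit in defining $\goals$ as ``states where terminal transitions occur.'' I would make this assumption explicit (or cite where it is assumed in the setup) and, if needed, also verify that restricting attention to policies that follow fixed deterministic trajectories is without loss of optimality in the world MDP; this follows because the world MDP is itself deterministic, so stationary deterministic policies suffice.
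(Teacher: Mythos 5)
Your overall strategy is sound, but there is one concrete misreading of the setup that breaks your witness choices in both parts as written. The paper defines the goal space $\goals$ as the set of states \emph{from which} a terminal transition is experienced, and the penalty in $\rbar(s,g,a,s^\prime)$ fires when $g \neq s$ and $s^\prime$ is absorbing---the intended goal is compared against the pre-terminal state $s$, not against the absorbing successor $s^\prime$ (the paper's Q-learning algorithm accordingly inserts $s$, not $s^\prime$, into the goal buffer). Consequently, in your part (i) the correct witness in the absorbing case is $g=s$ (with $s\in\goals$), not $g=s^\prime$: choosing $g=s^\prime$ with $s^\prime\neq s$ yields $\rbarmin$, not $\reward(s,a,s^\prime)$. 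The same slip appears in your lower bound for part (ii): the witness must be the last non-absorbing state $s_T$ of the optimal trajectory, not the absorbing state it enters; with your choice the final reward along the trajectory would be $\rbarmin$ and the bound would fail. Both fixes are mechanical, and with them your argument goes through.

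Beyond that correction, your proof of (ii) takes a genuinely different route from the paper's. The paper views each $g$ as defining a task $M_g$ with reward $\rbar(\cdot,g,\cdot,\cdot)$, notes that $\reward=\max_{g}\reward_{M_g}$ by part (i), and then invokes Corollary~1 of \citet{vanniekerk19} to conclude that optimal value functions compose under the max. You instead prove the two inequalities directly: the upper bound from the pointwise domination $\rbar\leq\reward$ (valid since $\rbarmin\leq\rmin$), and the lower bound by exhibiting the goal reached by an optimal trajectory. This is more elementary and self-contained, and it makes explicit the assumption the citation hides, namely that optimal trajectories terminate (the paper's proof of Theorem~2 assumes proper policies for the same reason); you are right to flag it. One small simplification is available: if the optimal trajectory never terminates, no penalty is ever incurred for any $g$, so the lower bound holds for every goal and the termination assumption is not strictly needed for this step.
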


As a result, the optimal policy for the current task can be obtained by computing 
$
\pistar(s) \in \argmax_{a \in \action} \left( \max_{g \in \goals} \qstarbar(s, g, a) \right).
$

Having established WVFs as a type of task-specific general value function (GVF) \citep{sutton11}, we next prove in Theorem~\ref{thm:mastery} that they do indeed have mastery---that is, they learn how to reach all achievable goal states in the world. We first formally define mastery as follows:

\begin{definition}
Let $\qstarbar$ be the optimal world action-value function for a task $M$. Then $\qstarbar$ has mastery if for all  $g \in \goals$ reachable from $s \in \state \setminus \{g\}$, there exists an optimal world policy
$
\pibarstar(s,g) \in \argmax\limits_{a \in \action}\qstarbar(s, g, a)
$ 
such that 
$
\pibarstar \in \argmax\limits_{\pibar} P^{\pibar}_{s}(s_{T} = g),
$
where $P^{\pibar}_{s}(s_{T} = g)$ is the probability of reaching $g$ from $s$ under a policy $\pibar$.
\label{def:mastery}
\end{definition}

\begin{theorem}
\label{theorem:2}
Let $\qstarbar$ be the optimal world action-value function for a task $M$. Then $\qstarbar$ has mastery.
\label{thm:mastery}
\qed
\end{theorem}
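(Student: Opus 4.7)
The plan is to exploit the fact that the penalty $\rbarmin$ in the definition of $\rbar$ is chosen large enough (via the bound derived in \cite{nangue2020boolean} from the bounds on $\reward$) that any trajectory which terminates at some absorbing state $g' \neq g$ has strictly lower return than any trajectory which terminates at $g$. Because the environment is deterministic, this forces every optimal trajectory from $s$ to terminate at $g$, so the action prescribed by any greedy policy with respect to $\qstarbar(\cdot, g, \cdot)$ keeps us on such a trajectory, and the greedy policy achieves the maximum reach probability.

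Concretely, I would fix $g \in \goals$ and $s \in \state \setminus \{g\}$ with $g$ reachable from $s$, and first \textbf{lower bound} $\vstarbar(s, g)$ by exhibiting a specific finite-length deterministic trajectory from $s$ to $g$ (which exists by reachability); since the original rewards are bounded in $[\rmin, \rmax]$ and one can take the trajectory to have length at most $|\state|$, the return along it is a finite quantity depending only on $\rmin$, $\rmax$, and the path length — crucially independent of $\rbarmin$. Next I would \textbf{upper bound} the return of any deterministic trajectory from $s$ whose first terminal transition lands in $g' \neq g$: the terminating step contributes $\rbarmin$, while the preceding at most $|\state|$ steps contribute at most $|\state|\,\rmax$. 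Comparing the two bounds and invoking the choice of $\rbarmin$ from \cite{nangue2020boolean}, which is precisely engineered to dominate the difference, yields a strict separation in favour of the $g$-reaching trajectory. Non-terminating trajectories are handled by observing that in a finite deterministic MDP they must enter a cycle through non-absorbing states, so one can cut the cycle and redirect through the reachable path to $g$, producing a trajectory of strictly higher value; hence no non-terminating behaviour can be optimal either.

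Putting the pieces together, let $\pibarstar(\cdot, g)$ be any deterministic policy that is greedy with respect to $\qstarbar(\cdot, g, \cdot)$ at every state. By Bellman optimality, unrolling $\pibarstar$ from $s$ produces a trajectory whose return equals $\vstarbar(s, g)$. By the two bounds above, no such trajectory can terminate outside $g$ or fail to terminate, so it must terminate at $g$. Consequently $P^{\pibarstar}_{s}(s_{T} = g) = 1$, which is the maximum possible reach probability in a deterministic environment with $g$ reachable, giving $\pibarstar \in \argmax_{\pibar} P^{\pibar}_{s}(s_{T} = g)$ and hence mastery.

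The main obstacle I expect is the clean handling of non-terminating trajectories in the undiscounted formulation: one has to make sure that no looping behaviour through non-terminal states can accumulate a return that matches or exceeds the return of a $g$-reaching trajectory. This is where the combination of finiteness of $\state$ and $\action$, the bounds on $\reward$, and the specific magnitude of $\rbarmin$ must be used carefully; everything else is essentially bookkeeping around the Bellman optimality equations.
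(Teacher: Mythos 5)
Your proof is correct and follows essentially the same route as the paper's: both arguments compare the return of a trajectory that terminates at $g$ against one whose terminal transition lands at some $g' \neq g$, and use the magnitude of $\rbarmin$ (derived from $\rmin$, $\rmax$, and a path-length bound) to show the latter is strictly worse, forcing the greedy policy for goal $g$ to terminate at $g$. The only cosmetic difference is that the paper phrases this as a contradiction against a shortest $g$-reaching trajectory and simply asserts that the optimal policy is proper, whereas you bound the two returns directly and spell out the non-terminating case explicitly.
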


\SetAlgoNoLine
\begin{algorithm}
\label{alg:dq}
\DontPrintSemicolon
    \SetKwInOut{Initialise}{Initialise}
 \Initialise{ WVF $\qbar$, goal buffer $\goals$, learning rate $\alpha$ \;}
\ForEach{episode}{
    Observe initial state $s\in\state$ and sample $g \in \goals$\; 
   \While{episode is not done}{
    $a \gets 
    \begin{cases}
    \argmax\limits_{a \in \action} \qbar(s, g, a) & \mbox{w.p.  } 1 - \varepsilon  \\
    \text{a random action} & \mbox{w.p. } \varepsilon 
    \end{cases}$ \;
   Execute $a$, observe reward $r$ and next state $s^\prime$ \;
   \textbf{if} \textit{$s^\prime$ is absorbing} \textbf{then}  $\goals \leftarrow \goals \cup \{s\}$ \;
   \For{$g^\prime\in\goals$}{
    $\bar{r} \gets \rbarmin$ \textbf{if} $g^\prime \neq s$ and $s \in \goals$ \textbf{else} $r$ \;
    $\delta \gets \left[ \bar{r} + \max\limits_{a^\prime} \qbar(s^\prime, g^\prime, a^\prime) \right] - \qbar(s, g^\prime, a)$\;
    $\qbar(s, g^\prime, a) \gets \qbar(s, g^\prime, a) + \alpha \delta$\;
    }
    $s \leftarrow s^\prime$
   }
 }
 \caption{Q-learning for WVFs}
 \label{algo:qlearn}
\end{algorithm}

Finally, we note that while GVFs can also be used to construct goal-oriented value functions, questions remain open as to the origins of goals and how to define goal-specific rewards. 
WVFs are a subset of GVFs that answer these questions---goals are simply states with terminal transitions, while goal rewards are specified by $\rbar$. 
Answering these questions in this way confers several advantages, which we desribe below. 

\subsection{Planning with World Value Functions}

If the agent's goal space coincides with the state space ($\goals=\state$), then an optimal WVF will implicitly encode the dynamics of the world.
We can then estimate the transition probabilities for each $s,a\in\state\times\action$ using only the reward function and optimal WVF. That is, $P(s,a,s')$ for all $s^\prime\in\state$ can be obtained by simply solving the system of Bellman optimality equations given by each goal $ g\in\state$:  
$
\qstarbar(s,g,a) = \sum_{s' \in \state} p(s,a,s^\prime) \left[ \rbar(s,g,a,s') + \vstarbar(s',g) \right].
$
In practice, if the transition probabilities are known to be non-zero only in a neighbourhood $\mathcal{N}(s)$ of state $s$ (as is common in most domains), then we only require that the WVF be near-optimal for  $s^\prime,g\in\mathcal{N}(s)\times\mathcal{N}(s)$.

\subsection{Multitask Transfer with World Value Functions}

We now show the advantage of WVFs under the assumption that an agent may be faced with solving several tasks within the same world.
In other words, we assume that all tasks share the same state space, action space and dynamics, but differ in their reward functions. 
Formally, we define the world as a background MDP $M_0 = (\mathcal{\state}_0, \action_0, \dynamics_0, \reward_0 )$ with its own state space, action space, transition dynamics and background reward function. 
Any individual task $M$ is defined by a reward function $\reward_M^\tau(s,a)$ that is non-zero only for transitions entering terminal states.
The reward function for the resulting MDP is then simply $\reward_M(s,a,s^\prime) \coloneqq \reward_0(s,a,s^\prime) + \reward_M^\tau(s,a)$. 
We denote the set of all such tasks as $\tasks$, and the corresponding set of optimal WVFs as $\goalq$.

One immediate result is that if tasks share the same background MDP, then their WVFs share the same world policy.
That is, the agent has the same notion of goals and how to reach them, regardless of the current  task.
Similarly, if we require that the world policies be the same across tasks, then we have that the tasks must come from the same world.
This is formalised by Theorem~\ref{thm:pi1_e_pi2} below.

\begin{theorem} 
Let $\goalq$ be the set of optimal world $\bar{Q}$-value functions with mastery of tasks in $\tasks$. Then for all $s \neq g \in \state\times\goals$,
\[
\pibarstar(s,g) \in \argmax\limits_{a \in \action}\qstarbar_{M_1}(s, g, a) 
\]
\[
\iff 
\]
\[
\pibarstar(s,g) \in \argmax\limits_{a \in \action}\qstarbar_{M_2}(s, g, a) ~ \forall M_1, M_2 \in \tasks.
\]
\label{thm:pi1_e_pi2}
\qed
\end{theorem}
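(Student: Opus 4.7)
The key observation is that the mastery condition in Definition~\ref{def:mastery} is phrased purely in terms of $P^{\pibar}_{s}(s_{T} = g)$, a quantity determined by the policy $\pibar$ and the transition dynamics $\dynamics$ alone and carrying no dependence on the reward. Since every task in $\tasks$ is built on the same background MDP $M_0 = (\state_0, \action_0, \dynamics_0, \reward_0)$ and differs only in the task-specific terminal reward $\reward_M^\tau$, the collection of policies that maximise $P^{\pibar}_{s}(s_{T} = g)$ is identical for every $M \in \tasks$; in the deterministic setting this is simply the set of policies that reach $g$ from $s$ with probability one.

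The iff then splits into two symmetric directions. Fix $(s, g)$ with $s \neq g$ and $g$ reachable from $s$, and let $\pibarstar$ be a mastery policy in the sense of Definition~\ref{def:mastery}. For the forward direction, assume $\pibarstar(s, g) \in \argmax_{a \in \action}\qstarbar_{M_1}(s, g, a)$. By Theorem~\ref{thm:mastery}, both $\qstarbar_{M_1}$ and $\qstarbar_{M_2}$ admit mastery policies, and by the previous paragraph these policies are drawn from the same dynamics-determined pool. The goal is to promote membership of the action $\pibarstar(s, g)$ in $\argmax_{a}\qstarbar_{M_1}(s,g,a)$ to membership in $\argmax_{a}\qstarbar_{M_2}(s,g,a)$ using only the shared dynamics; the reverse direction follows by interchanging $M_1$ and $M_2$.

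The principal subtlety is that two tasks may assign different terminal bonuses $\reward_{M}^{\tau}(s_{T-1}, a_{T-1})$ to the transition entering $g$, so one might naively expect different tasks to prefer different routes and therefore different first actions at $s$. The plan to defuse this is to exploit two structural features of $\rbar$ and the deterministic setting together: first, the penalty $\rbarmin$ is chosen large enough that any action leaving the mastery-respecting set is strictly dominated in $\qstarbar_M$ regardless of $M$, so the argmax in either task reduces to those actions initiating trajectories that reach $g$ without ever entering another absorbing state; second, determinism ensures that once a mastery-initiating action at $s$ is fixed, the continuation along any mastery policy produces the same terminal transition into $g$ as viewed from the shared dynamics, so the trade-off between background return and terminal bonus available from each initiating action is task-independent. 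Combining these two facts, the set $\argmax_{a}\qstarbar_M(s, g, a)$, viewed through mastery policies $\pibarstar$, coincides with the dynamics-determined set of mastery-initiating first actions, which is the same for $M_1$ and $M_2$. The main technical obstacle will be making this last identification rigorous, namely translating the global mastery property of $\pibarstar$ into a purely local statement about its first action at $(s, g)$ and showing that the residual dependence on $\reward_M^\tau$ cancels once the mastery restriction has been imposed.
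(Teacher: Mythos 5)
Your plan addresses only half of the theorem. The statement quantifies over all pairs $s \neq g$ in $\state\times\goals$, and the paper's proof splits into two cases: (a) $g$ reachable from $s$, and (b) $g$ unreachable from $s$. You explicitly fix ``$(s,g)$ with $s\neq g$ and $g$ reachable from $s$'' and never return to the unreachable case. That case is not vacuous and cannot be absorbed into the mastery argument, since mastery (Definition~\ref{def:mastery}) only speaks about reachable goals. The paper disposes of it by a different, purely reward-level observation: when $g$ is unreachable from $s$, every absorbing transition out of any state on a trajectory from $s$ is penalised with $\rbarmin$ in \emph{both} tasks (the clause $g\neq s$ always fires), and every non-absorbing transition carries the shared background reward; hence $\rbar_{M_1}(s,g,a,s^\prime)=\rbar_{M_2}(s,g,a,s^\prime)$ pointwise, so $\qstarbar_{M_1}(s,g,\cdot)=\qstarbar_{M_2}(s,g,\cdot)$ identically and the equivalence of argmax sets is immediate. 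You need some such argument to cover these pairs.

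For the reachable case, your route is in spirit the same as the paper's --- both reduce to Theorem~\ref{thm:mastery} --- but the paper simply cites mastery of $\qstarbar_{M_1}$ and $\qstarbar_{M_2}$ and stops, whereas you correctly identify that something more is needed to turn the global mastery property into an identification of the two \emph{local} argmax sets at $(s,g)$ (different terminal bonuses could in principle favour different first actions). Your two observations --- that $\rbarmin$ strictly dominates any action initiating a trajectory into a wrong absorbing state, and that along any trajectory reaching $g$ the task-specific reward enters only as an additive constant $\reward_M^\tau(g,\cdot)$ at the terminal step while all intermediate rewards are the shared background rewards --- are exactly the right ingredients, and this is the same decomposition $\qstarbar_M(s,g,a)=\gstar+\rbar_M^\tau(\cdot,\cdot)$ the paper invokes just before Theorem~\ref{thm:R_WVF}. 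But you flag the rigorous completion as an unresolved ``main technical obstacle'' rather than carrying it out, so as submitted the proposal is a proof sketch with one case missing and the other case left open at its acknowledged crux.
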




Since all tasks in $\tasks$ share the same dynamics (and consequently the same world policy), their corresponding WVFs can be written as $\qstarbar_M(s, g, a) = \gstar + \rbar_M^\tau(s^\prime, a^\prime)$ for some $s^\prime,a^\prime \in \state\times\action$, where $\gstar$ is a constant across tasks that represents the sum of rewards starting from $s$ and taking action $a$ up until $g$, but not including the terminal reward. 
Using this fact, Theorem~\ref{thm:R_WVF} shows that the optimal value function and policy for any task can be obtained zero-shot from an arbitrary WVF given the task-specific rewards:

\begin{theorem}
Let $\reward_M^\tau$ be the given task-specific reward function for a task $M\in\tasks$, and let $\qstarbar\in\goalq$ be an arbitrary WVF. Let $\vstarbara_M(s,g)$ be the estimated WVF of $M$ given by 
\[
\max\limits_{a\in\action}\qstarbar(s,g,a) + \left(\max\limits_{a\in\action}\reward_M^\tau(g,a)-\max\limits_{a\in\action}\qstarbar(g,g,a)\right).
\]
Then,
\begin{enumerate*}[label=(\roman*)]
    \item for all $ g\in\goals$ reachable from $s\in\state$, $\vstarbar_M(s,g) = \vstarbara_M(s,g)$.
    \item $\vstar_M(s) = \max\limits_{g\in\goals} \vstarbara(s,g)$, and $\pistar_M(s) \in \argmax\limits_{a\in\action}\qstarbar(s,\argmax\limits_{g\in\goals} \vstarbara_M(s,g),a)$.
\end{enumerate*}

\label{thm:R_WVF}
\qed
\end{theorem}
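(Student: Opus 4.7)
The plan is to first prove (i) by substituting into the task-factored decomposition $\qstarbar_{M^{\prime\prime}}(s,g,a) = G^*_{s,g,a} + \reward_{M^{\prime\prime}}^\tau(g, a^*_g)$ recorded immediately before the theorem, and then to deduce (ii) by combining (i) with Theorem~\ref{thm:1}, mastery (Theorem~\ref{thm:mastery}), and the equality of world policies across tasks (Theorem~\ref{thm:pi1_e_pi2}).

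For part~(i), I would first use Theorem~\ref{thm:pi1_e_pi2} to observe that the arbitrary WVF $\qstarbar$, belonging to some reference task $M^\prime \in \tasks$, shares its optimal world policy $\pibarstar$ with $\qstarbar_M$. In particular, the optimal terminating action at each $g \in \goals$, which I denote $a^*_g \coloneqq \pibarstar(g,g)$, is common across all tasks in $\tasks$, and the greedy trajectory from $(s,a)$ through $g$ is identical under every $\qstarbar_{M^{\prime\prime}}$; consequently $G^*_{s,g,a}$ is truly task-independent. Applying the factorisation at $M^{\prime\prime} = M^\prime$ to both $(s,g)$ and $(g,g)$, the two occurrences of $\reward_{M^\prime}^\tau(g, a^*_g)$ appearing in $\max_a\qstarbar(s,g,a) - \max_a\qstarbar(g,g,a)$ cancel telescopically. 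Next, since $\reward_M^\tau$ is supported only on the terminating action at $g$ and $a^*_g$ is optimal across $\tasks$, one has $\max_a \reward_M^\tau(g,a) = \reward_M^\tau(g, a^*_g)$. Collecting terms, $\vstarbara_M(s,g)$ collapses to $G^*_{s,g,a^*(s,g)} + \reward_M^\tau(g, a^*_g)$, which is precisely the factored form of $\vstarbar_M(s,g)$ applied to $M$ itself.

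For part~(ii), Theorem~\ref{thm:1} gives $\qstar_M(s,a) = \max_{g \in \goals} \qstarbar_M(s,g,a)$, so commuting the two maxima yields
\begin{equation*}
\vstar_M(s) = \max_{a \in \action}\max_{g \in \goals} \qstarbar_M(s,g,a) = \max_{g \in \goals} \vstarbar_M(s,g).
\end{equation*}
Mastery (Theorem~\ref{thm:mastery}) guarantees that the outer maximiser $g^*$ is reachable from $s$, so part~(i) allows me to substitute $\vstarbar_M$ by $\vstarbara_M$ and conclude $\vstar_M(s) = \max_g \vstarbara_M(s,g)$. The policy identity then follows from the same commutation: $\pistar_M(s) \in \argmax_{a \in \action}\qstarbar_M(s, g^*, a)$ for $g^* \in \argmax_g \vstarbara_M(s,g)$, and a final appeal to Theorem~\ref{thm:pi1_e_pi2} lets me swap $\qstarbar_M$ for the arbitrary $\qstarbar$ inside that $\argmax$, since their world policies agree at every pair $(s,g)$.

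The hard part will be the bookkeeping at the goal state in part~(i). Three things must line up simultaneously: the decomposition $\qstarbar_{M^{\prime\prime}} = G^* + \reward_{M^{\prime\prime}}^\tau$ must hold with $G^*$ independent of $M^{\prime\prime}$, the action $a^*_g$ must be shared across all tasks (both from Theorem~\ref{thm:pi1_e_pi2}), and $\max_a \reward_M^\tau(g,a)$ must collapse to $\reward_M^\tau(g, a^*_g)$. Any slip in these claims---for instance if $g$ admits several distinct terminating actions whose background rewards differ---leaves a spurious task-independent offset in the estimate and the telescoping fails.
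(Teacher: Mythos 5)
Your proposal is correct and follows essentially the same route as the paper: part~(i) is proved by applying the task-factored decomposition $\qstarbar_{M}(s,g,a)=\gstar+\rbar^\tau_{M}(\cdot)$ (justified via Theorems~\ref{thm:mastery} and~\ref{thm:pi1_e_pi2}) at both $(s,g)$ and $(g,g)$ so that the reference task's terminal reward cancels, and part~(ii) follows by combining (i) with the max-over-goals identity of Theorem~\ref{thm:1}. The only difference is that you spell out part~(ii) (commuting the two maxima and checking reachability of the maximising goal) where the paper simply writes ``follows directly,'' and the bookkeeping worry you flag about multiple terminating actions at $g$ is present in the paper's own argument as well.
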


This has several important implications for transfer learning. 
Most importantly, an agent can learn an arbitrary WVF with unsupervised pretraining and then solve any new task by simply estimating the reward function (from experience or demonstrations).

\section{Experiments}

We empirically validate the properties of WVFs in the Four Rooms domain \citep{sutton99}, where an agent is required to reach various goal positions. 
The agent can move in any of the four cardinal directions at each timestep (with reward $-0.1$), but colliding with a wall leaves it in the same state. The agent also has a ``done'' action that can choose to terminate at any position (with reward $10$ if it is the goal of the current task).
For each of the experiments below, we consider the case where the agent's goals are the entire state space ($\goals = \state$). 

\subsection{Learning World Value Functions}

To verify that WVFs can be learned with standard model-free algorithms, we train an agent using Q-learning on a task where it must learn to navigate to either the middle of the top-left or bottom-right rooms. 
Figure~\ref{fig:WVF1} shows the learned WVF, which is generated by plotting the value functions for every goal position and displaying them at their respective $xy$ positions. 
Note how the values with respect the ``top-left'' and ``bottom-right'' goals are high (red), reflecting the high rewards the agent receives for reaching the goals it intended to achieve.
Figure~\ref{fig:WVF2} shows a close-up view of the learned WVF around the ``top-left'' goal. 
We can observe from the value gradient of the plots that the WVF does indeed learn how to reach all positions in the gridworld.
We can then maximise over goals to obtain the regular value function and policy (Figure~\ref{fig:WVF3}). 

Finally, we plot the returns obtained during the learning of both the WVF and regular value function, with results given by Figure~\ref{fig:WVF4}. 
Interestingly, this result indicates that it is more sample efficient to learn a WVF, despite the fact that it has an additional dimension that must be learned.
We theorise this is due to the induced goal-directed exploration of Algorithm~\ref{algo:qlearn}, which is far superior to $\varepsilon$-greedy exploration.

\begin{figure}[h!]
\centering
    \begin{subfigure}[b]{0.3\linewidth}
         \centering
         \includegraphics[width=\textwidth]{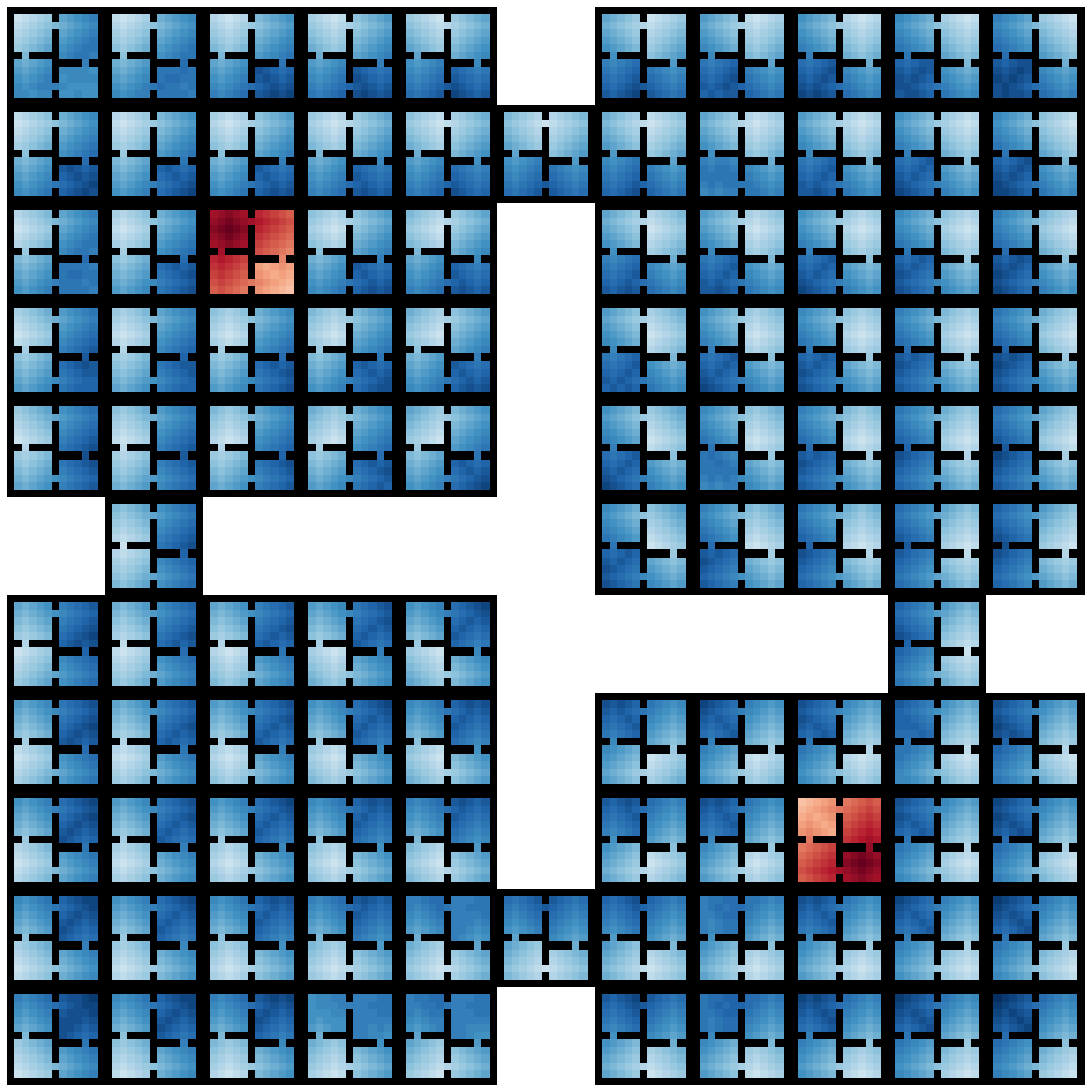}
         \caption{}
         \label{fig:WVF1}
     \end{subfigure}
     ~
     \begin{subfigure}[b]{0.3\linewidth}
         \centering
         \includegraphics[width=\textwidth]{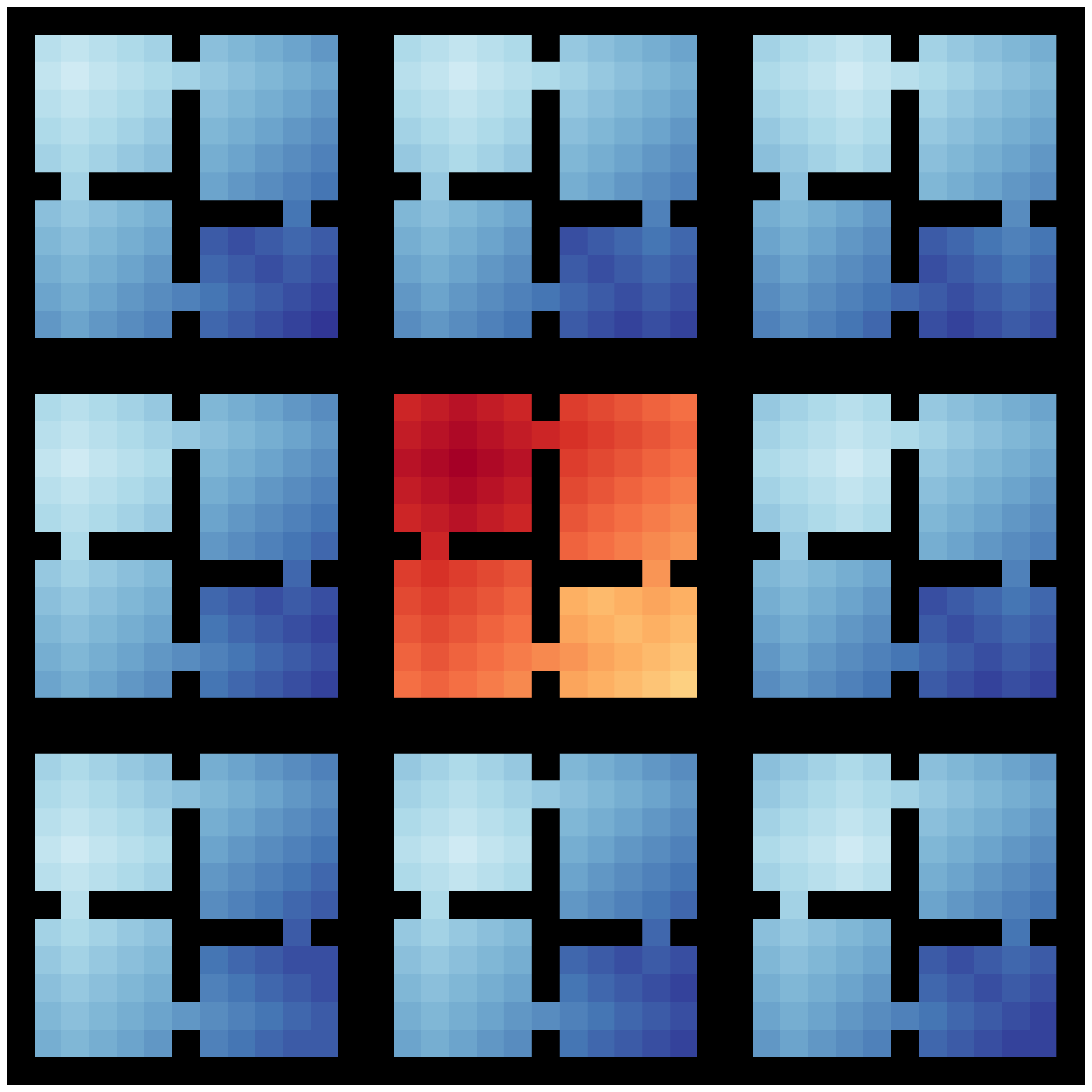}
         \caption{}
         \label{fig:WVF2}
     \end{subfigure}
     ~
     \begin{subfigure}[b]{0.3\linewidth}
         \centering
         \includegraphics[width=\textwidth]{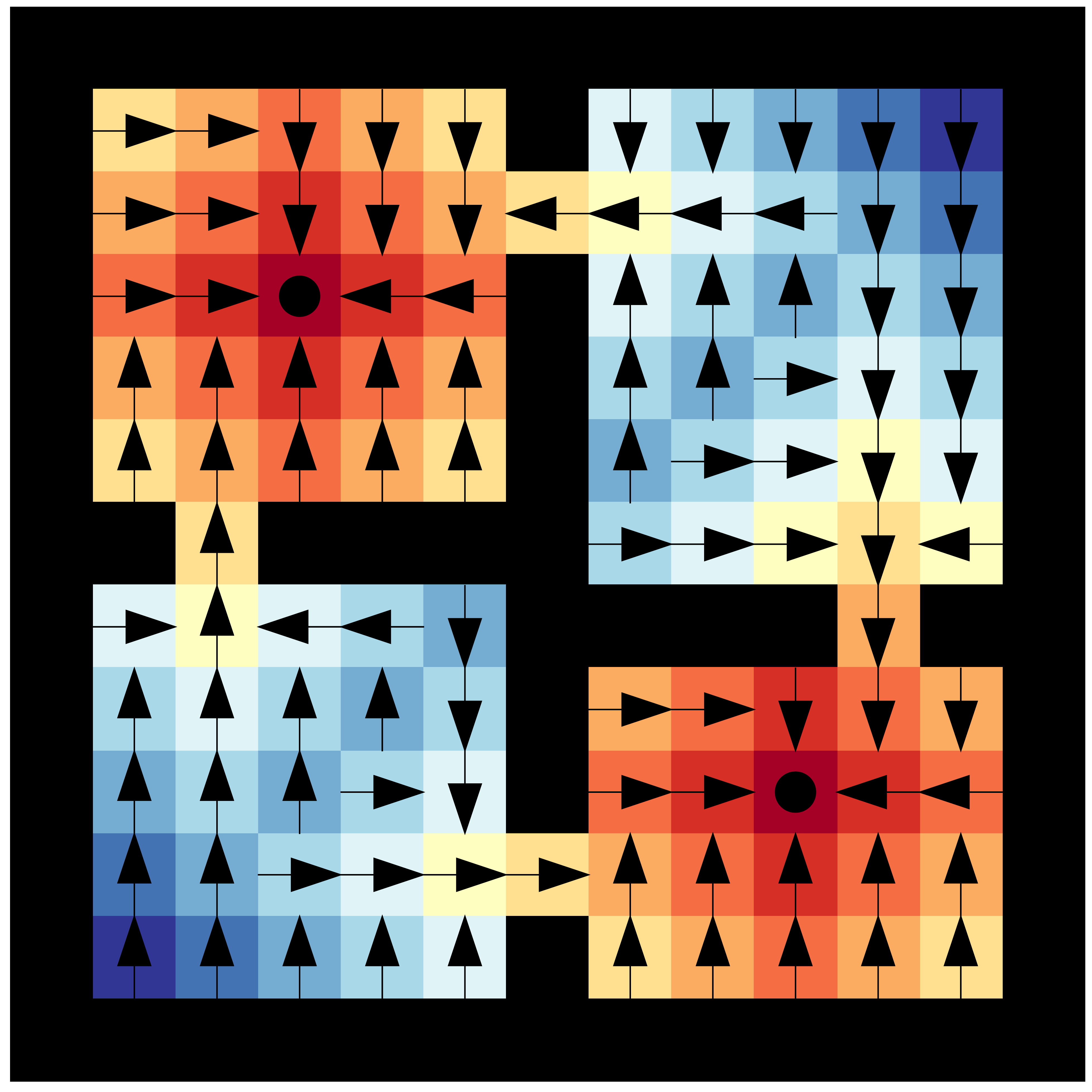}
         \caption{}
         \label{fig:WVF3}
     \end{subfigure}
     \\
     \centering
    \begin{subfigure}[b]{0.85\linewidth}
         \centering
         \includegraphics[width=\textwidth]{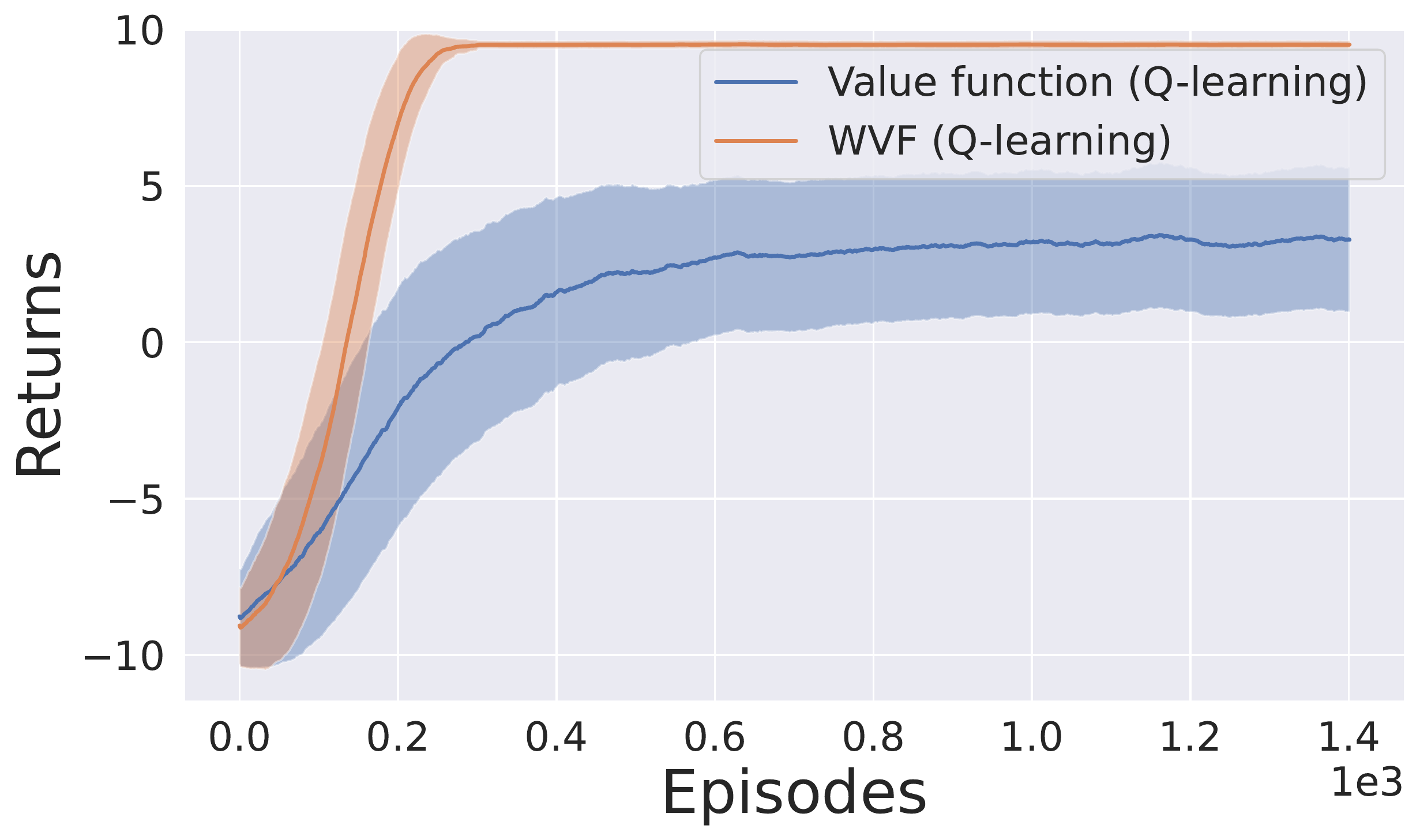}
         \caption{}
         \label{fig:WVF4}
     \end{subfigure}
     \caption{(a) Learned WVF. (b) Close-up view of the WVF for ``top-left'' goal. (c) Inferred values and policy for solving the current task. (d) Returns during training for both WVFs and regular value functions. Returns are calculated by greedy evaluation at the end of each episode. Mean and standard deviation over 25 random seeds are shown.}
     \label{fig:WVF}
\end{figure}

\begin{figure*}[t!]
\centering
\begin{subfigure}{.5\textwidth}
    \centering
    \includegraphics[height=1.1in]{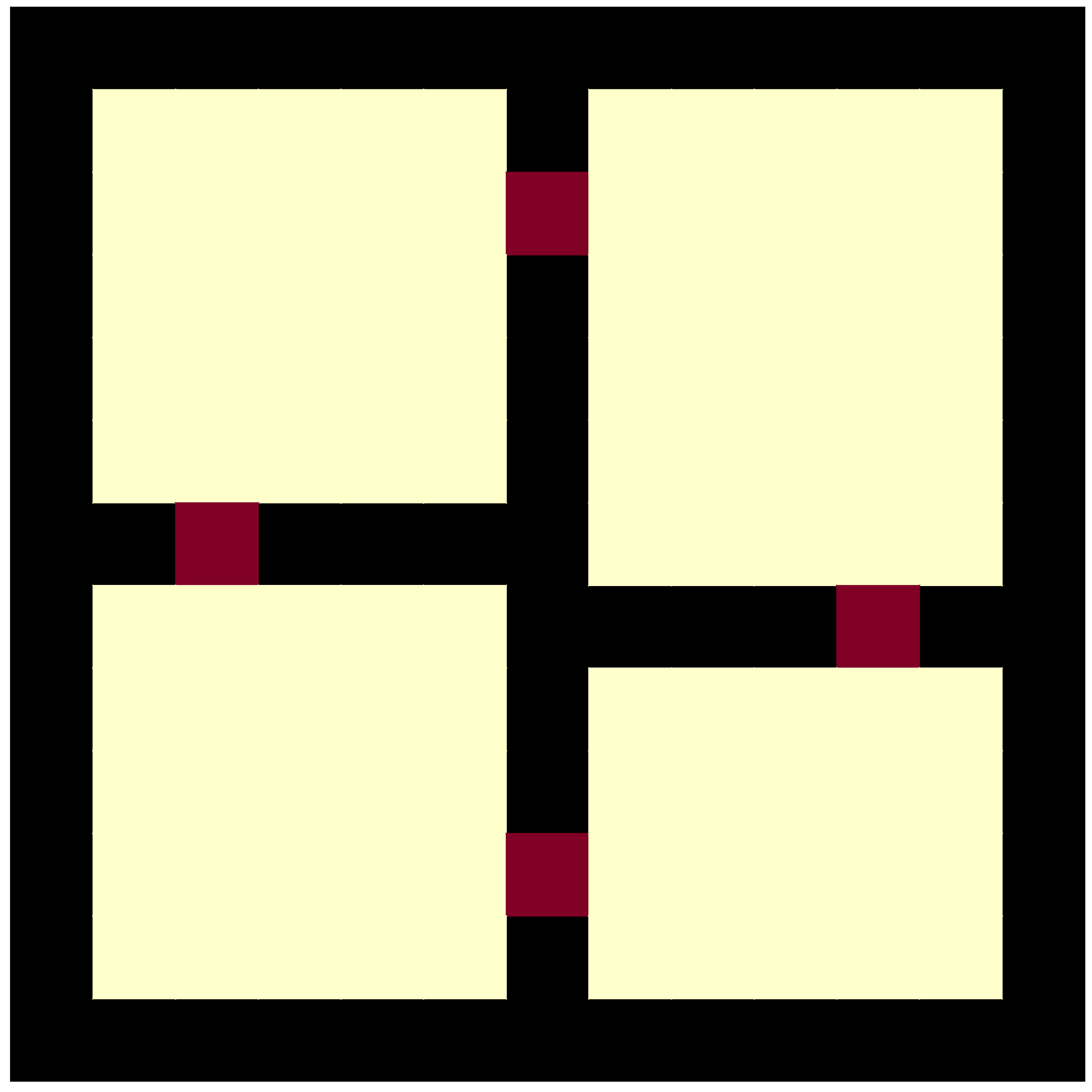}
    \includegraphics[height=1.1in]{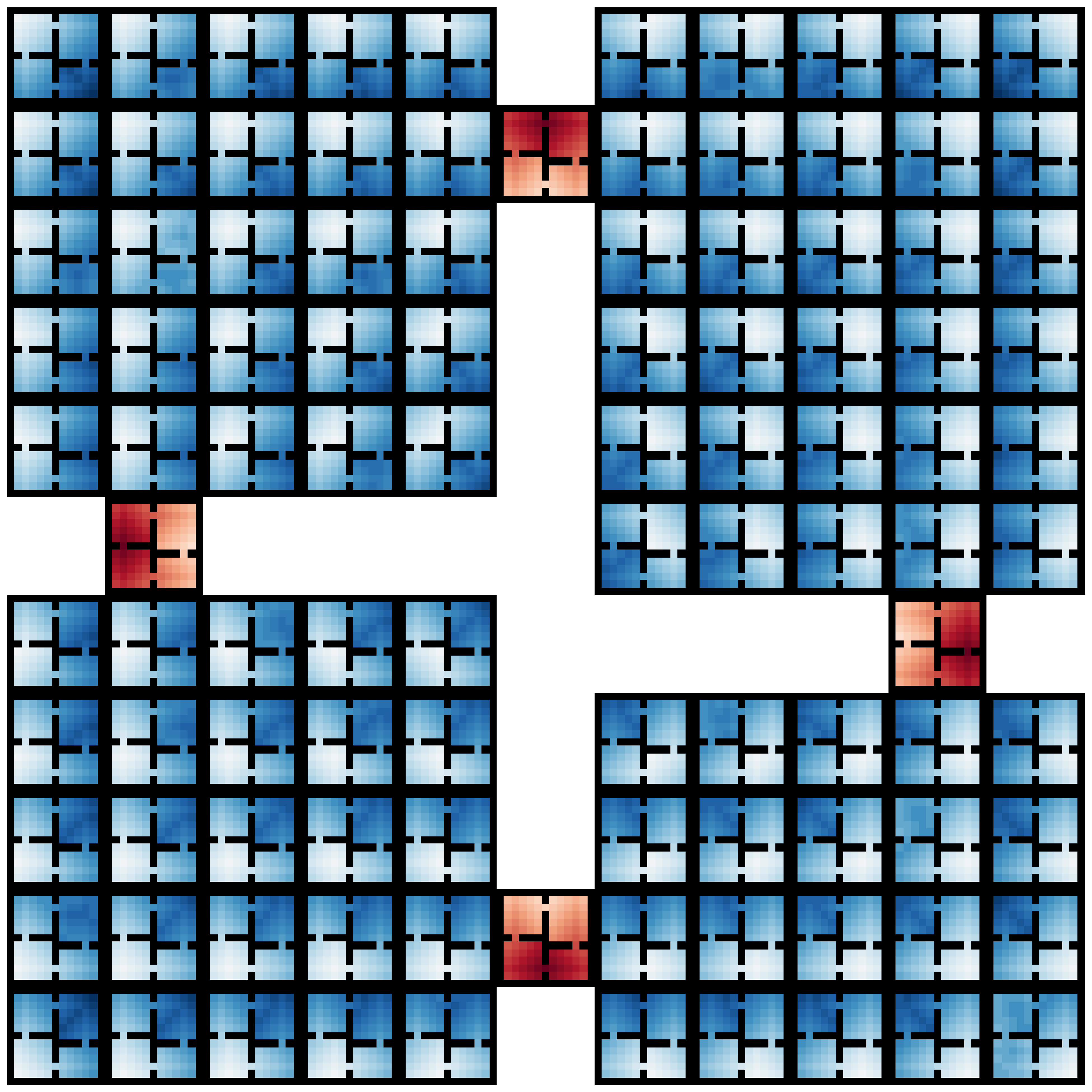}
    \includegraphics[height=1.1in]{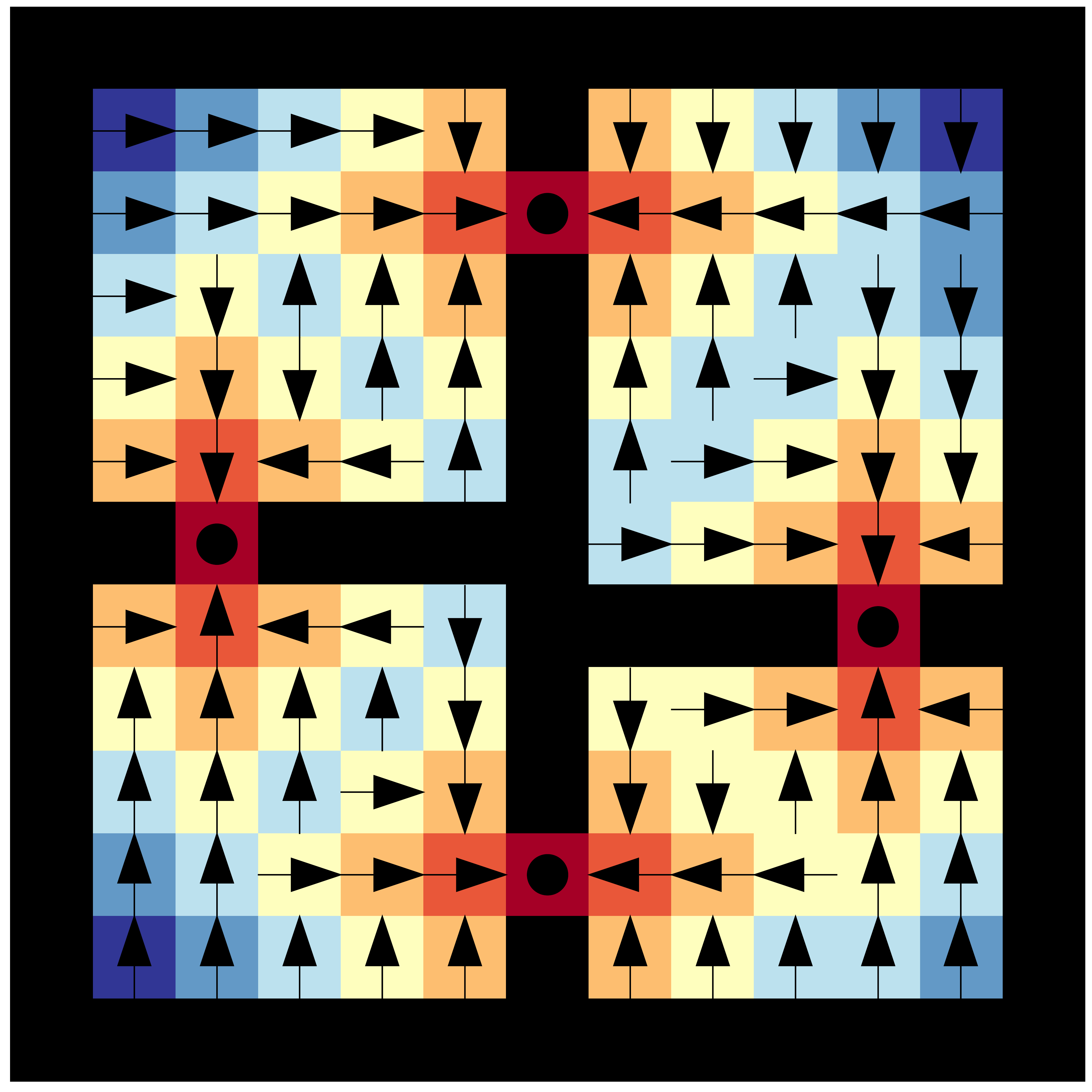}
    \caption{Navigating to the hallways.}
    \label{fig:task1}
\end{subfigure}%
\begin{subfigure}{.5\textwidth}
    \centering
    \includegraphics[height=1.1in]{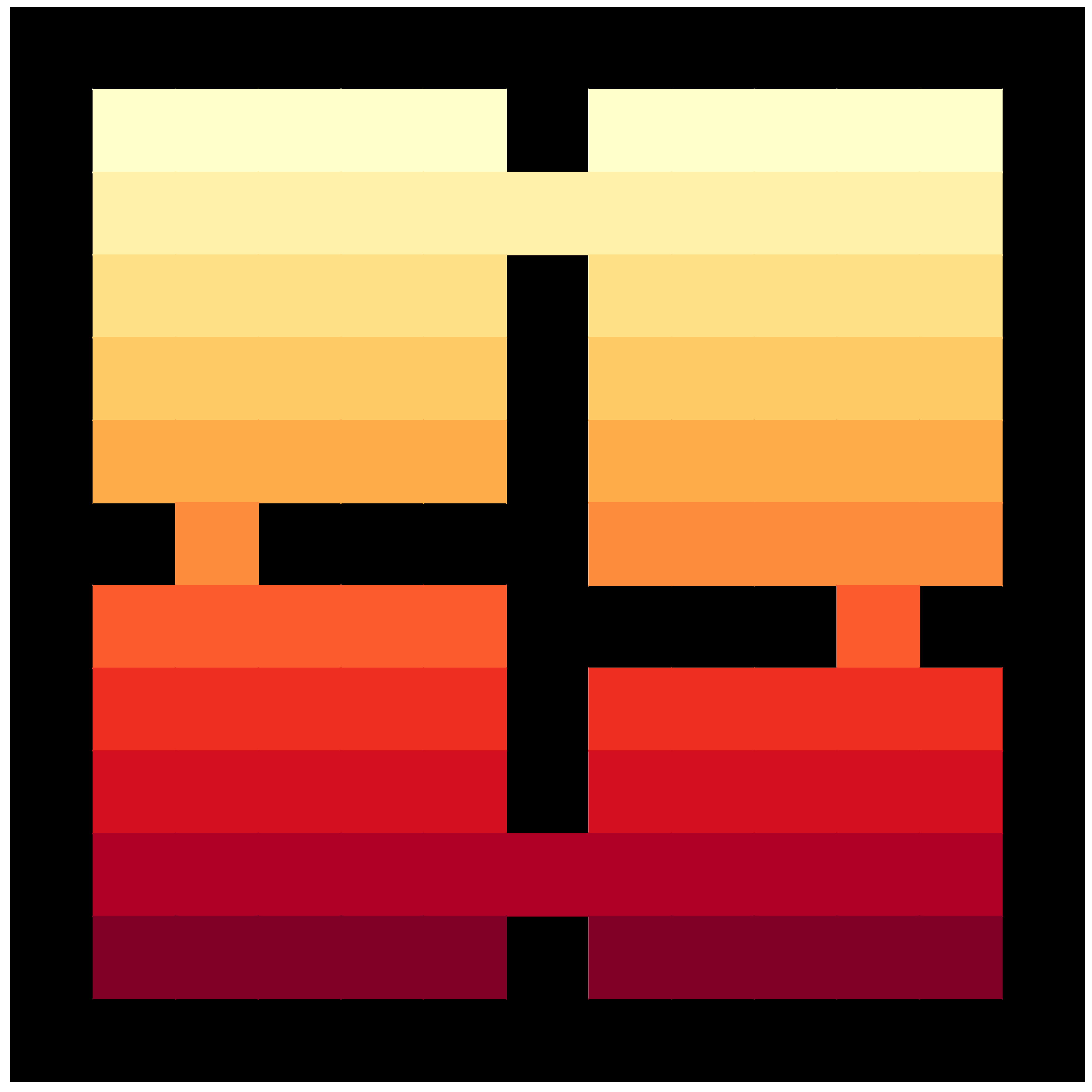}
    \includegraphics[height=1.1in]{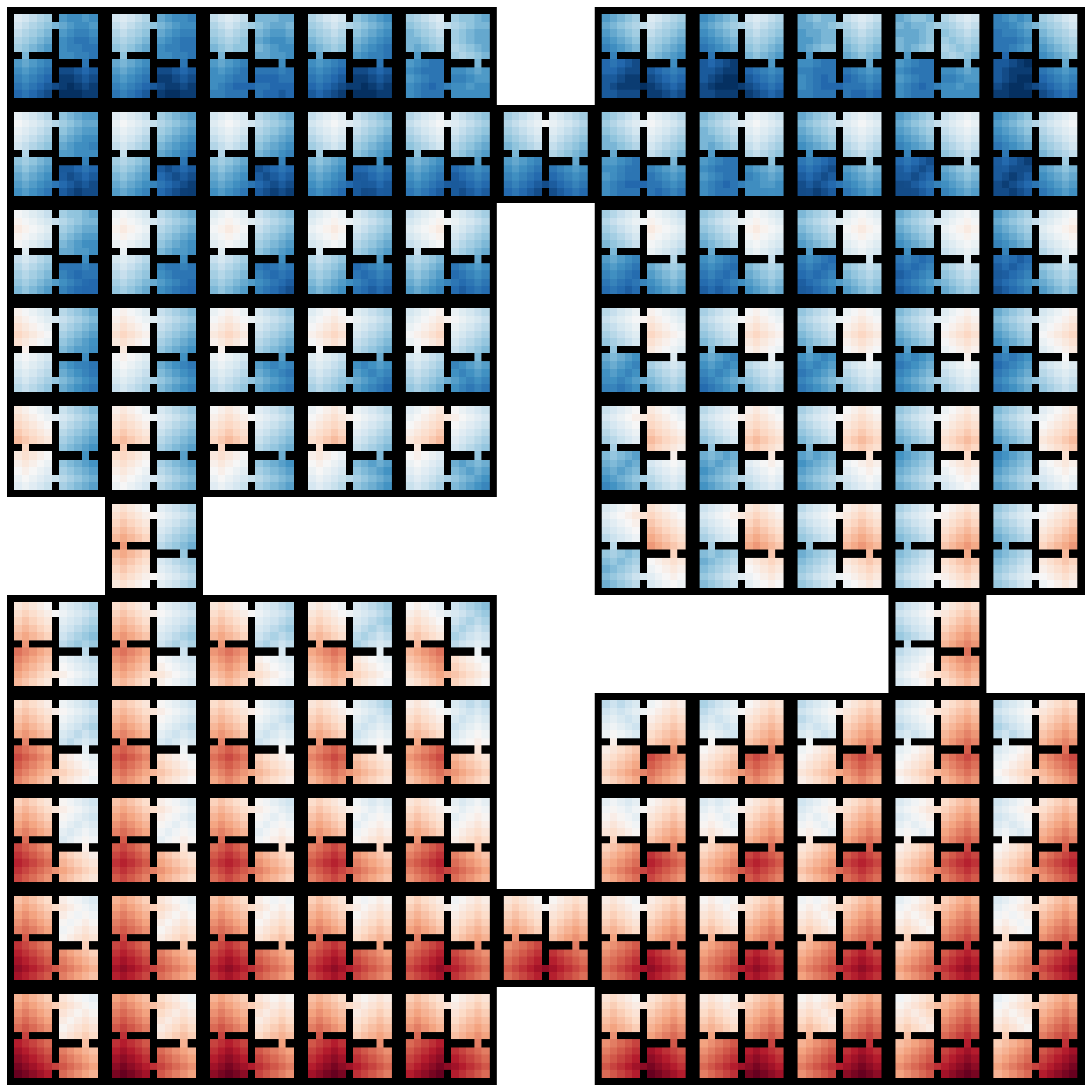}
    \includegraphics[height=1.1in]{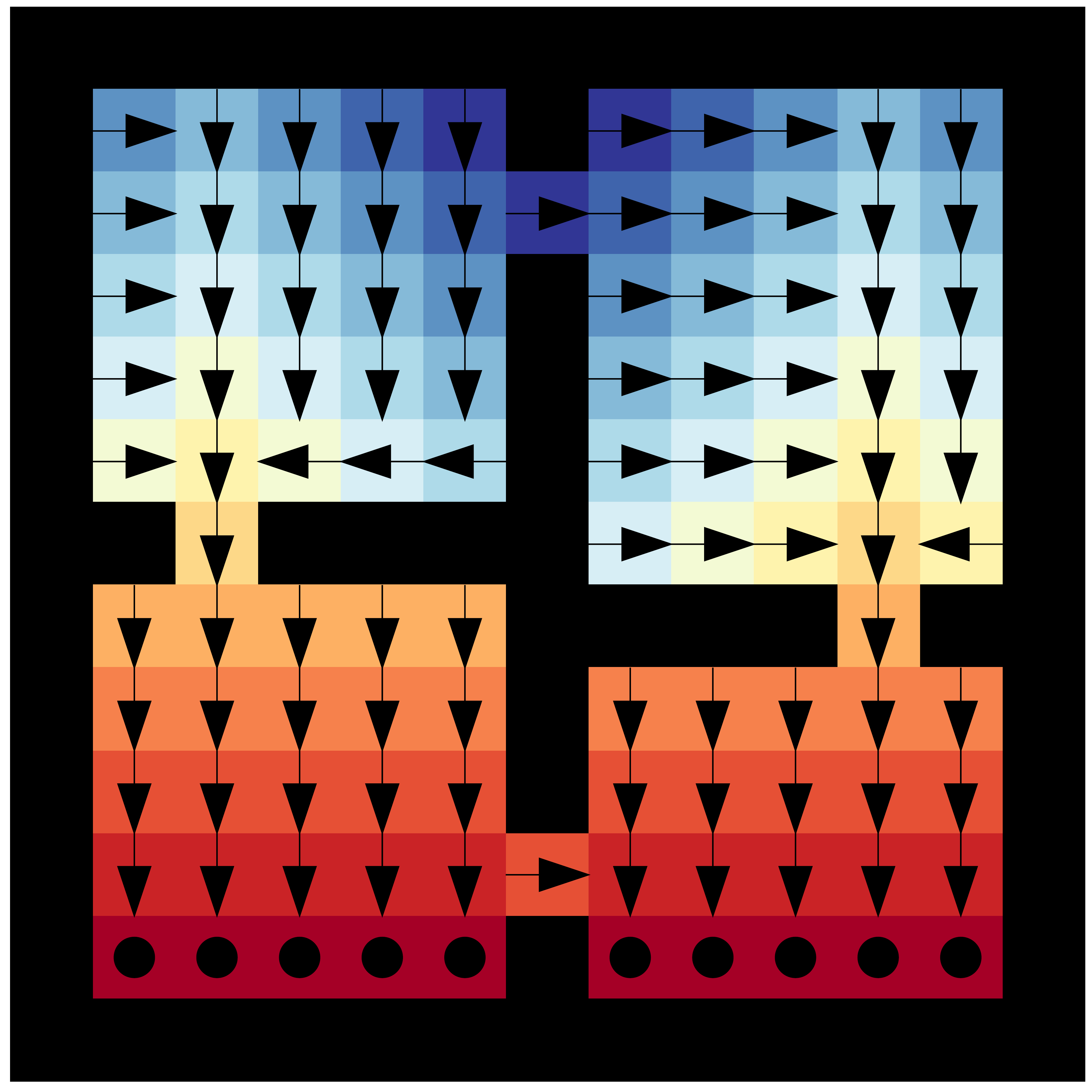}
    \caption{Navigating to the bottom of the grid.}
    \label{fig:task2}
\end{subfigure}%
\caption{From left to right on each figure: The task specific rewards, the inferred WVF using Theorem~\ref{thm:R_WVF}, and the inferred values and policy from maximising over goals for (a) reaching any of the hallways, and (b) reaching the bottom of the grid.}
\label{fig:R_WVF}
\end{figure*}

\subsection{Multitask Transfer with World Value Functions}

Having learned the WVF for the above task, we now show that it can be used to solve subsequent tasks by combining the WVF with the task-specific rewards as per Theorem~\ref{thm:R_WVF}. 
Critically, this means that any new task an agent might face can simply be solved by estimating its reward function, reducing the RL problem to a supervised learning one.
We consider two new tasks: navigating to any of the hallways, and navigating to the bottom of the grid. 
Figures~\ref{fig:task1} and \ref{fig:task2} illustrate the reward functions and subsequent WVFs and policies for these two tasks respectively.
Importantly, given the reward functions (which can be estimated from data), the optimal policies can immediately be computed without further learning. 

\subsection{Planning with World Value Functions}

Finally, we demonstrate that the transition probabilities can be inferred from the learned WVF. 
Figures~\ref{fig:transitions} (left) and (middle) respectively show the transitions inferred by solving the Bellman equations with $s^\prime,g\in\state\times\state$ and $s^\prime,g\in\mathcal{N}(s)\times\mathcal{N}(s)$.
For each, we infer the next state probabilities for taking each cardinal action at the center of each room, and place the corresponding arrow in the state with highest probability.
The red arrows in Figure~\ref{fig:transitions} (left) correspond to incorrectly inferred next states, which is a consequence of the learned WVF not being near optimal at all states for all goals.  
Figure~\ref{fig:transitions} (middle) shows that in practice, if the WVF is not near-optimal, we can still infer dynamics by using $s^\prime,g\in\mathcal{N}(s)\times\mathcal{N}(s)$.
Figure~\ref{fig:transitions} (right) shows sample trajectories for following the optimal policy using the inferred transition probabilities. The gray-scale color of each arrow corresponds to the normalised value prediction for that state.

Finally, we also demonstrate that these inferred dynamics can be used to improve planning by integrating WVFs into a Dyna-style architecture \citep{sutton90}. 
Our approach is illustrated by Algorithm~\ref{alg:dyna} in the Appendix, where we combine both model-free and model-based updates to learn the WVF. 
Importantly, since the dynamics are inferred from the WVF, using them to plan (Dyna-style) at the start of training is detrimental, since the WVF will make incorrect predictions. 
We mitigate this by computing the mean-squared error of the Bellman equations using the inferred next state, 
$MSE = \frac{1}{|\mathcal{N}(s)|}\sum_{g\in\mathcal{N}(s)}\left(\qbar(s,g,a)-\left[ \rbar(s,g,a,s^\prime) + \vbar(s^\prime,g) \right]\right)$,
and only use the WVF to plan when the error is less than a threshold
($MSE \leq 10^{-5}$).
We compare our approach to Q-learning for both WVFs and regular value functions, as well as Dyna for regular value functions. 
The results in Figure~\ref{fig:transitions4} illustrate that sample efficiency can be greatly improved by integrating the planning capabilities of WVFs.

\begin{figure}[h!]
\centering
    \begin{subfigure}[b]{0.3\linewidth}
         \centering
         \includegraphics[width=\textwidth]{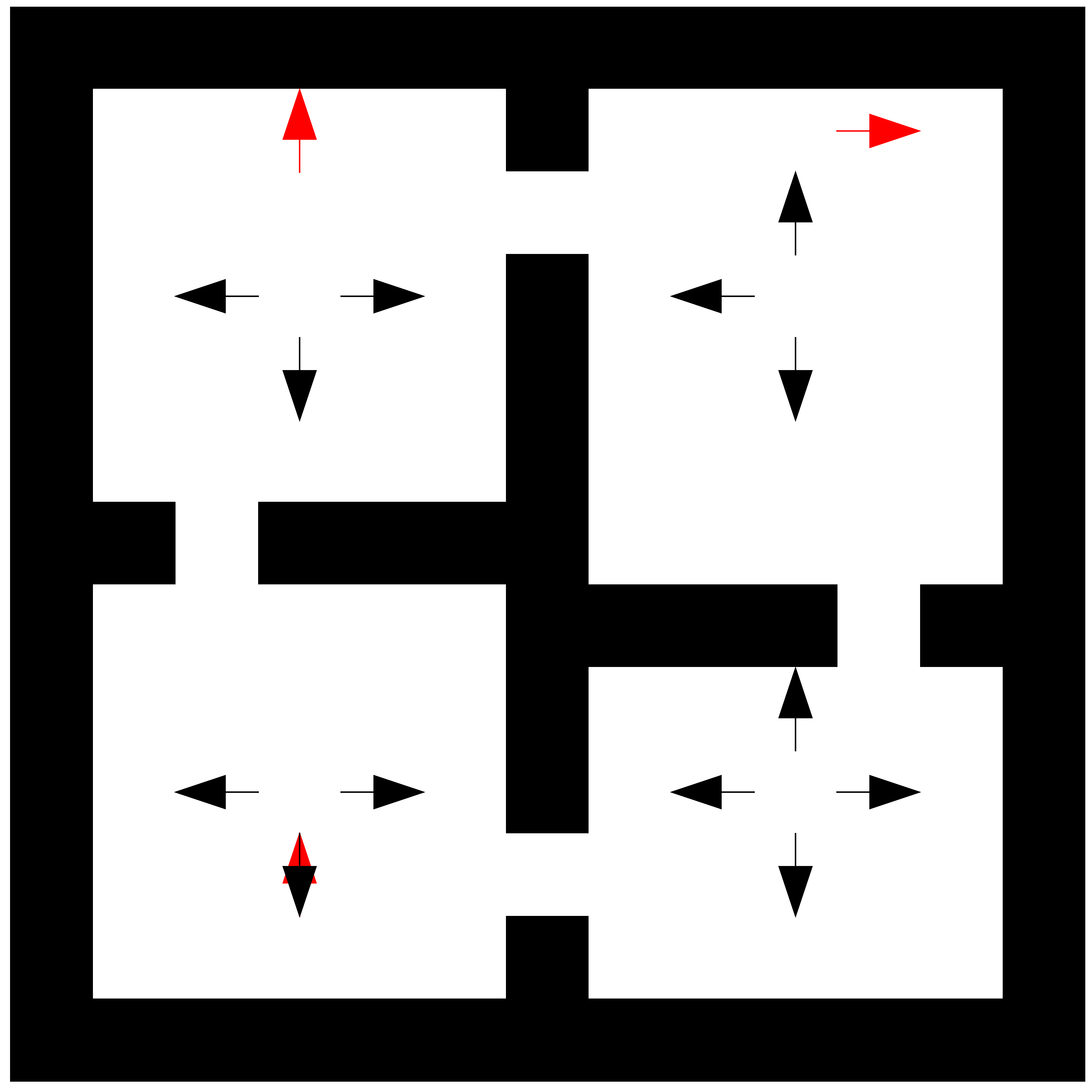}
         \caption{}
         \label{fig:transitions1}
     \end{subfigure}
     ~
     \begin{subfigure}[b]{0.3\linewidth}
         \centering
         \includegraphics[width=\textwidth]{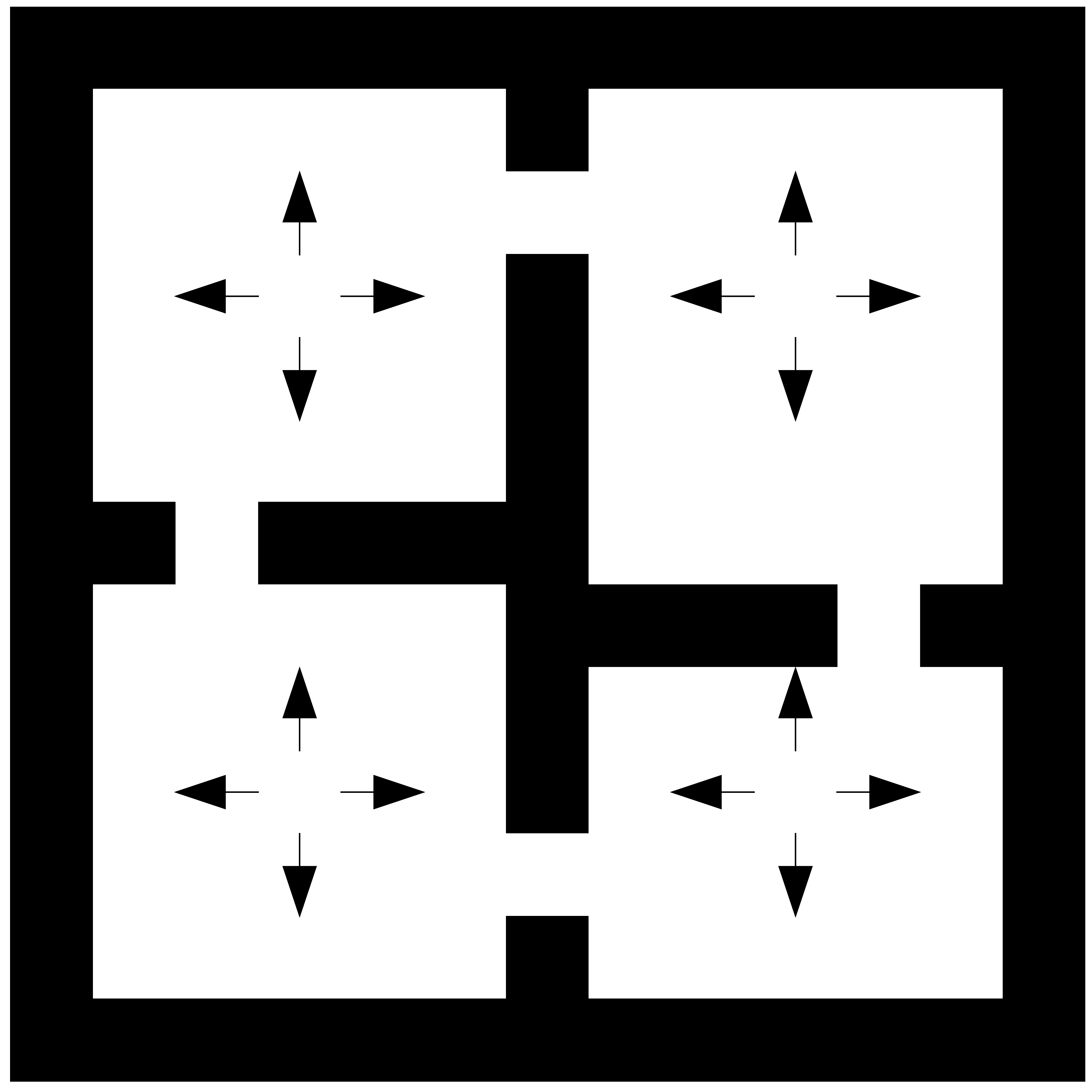}
         \caption{}
         \label{fig:transitions2}
     \end{subfigure}
     ~
     \begin{subfigure}[b]{0.3\linewidth}
         \centering
         \includegraphics[width=\textwidth]{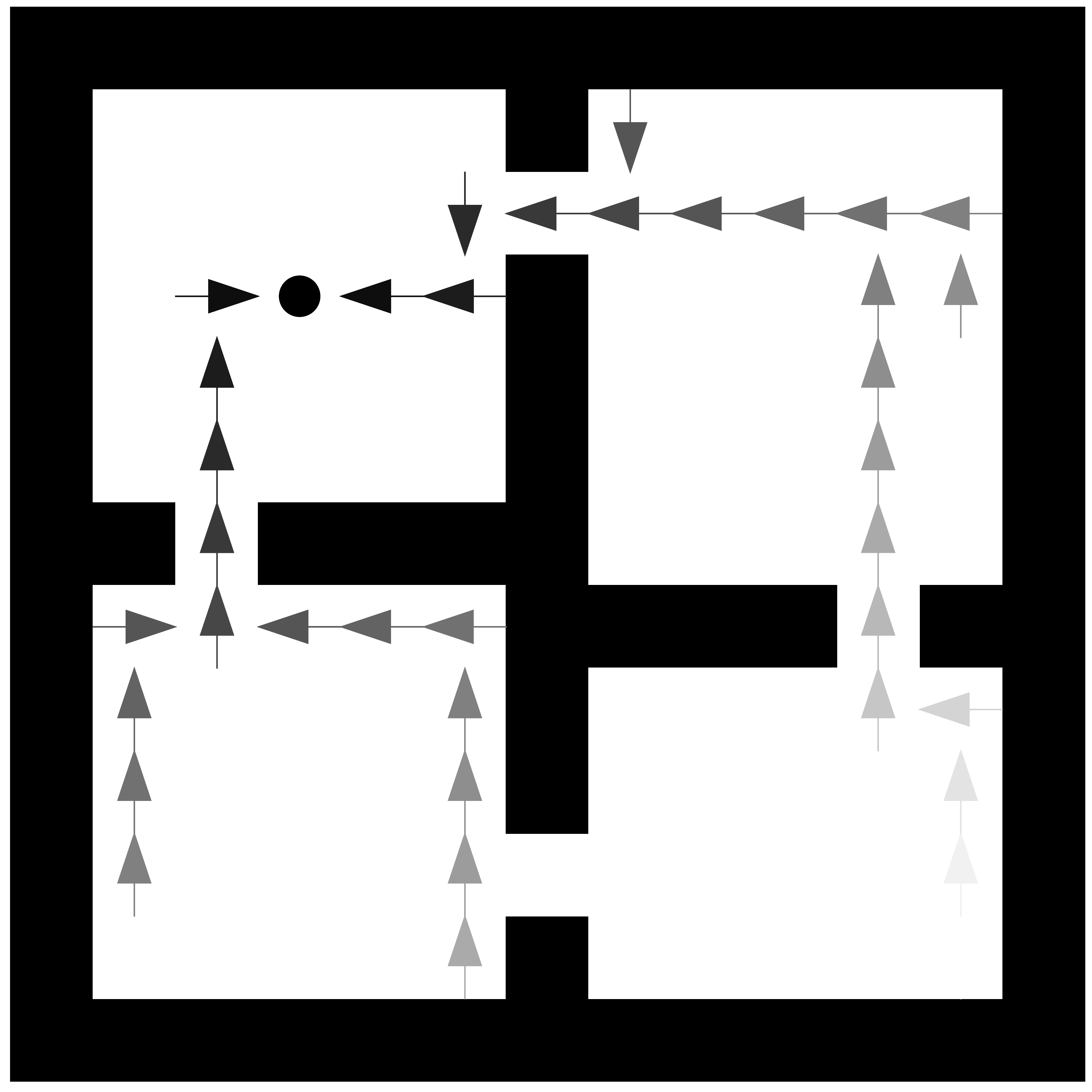}
         \caption{}
         \label{fig:transitions3}
     \end{subfigure}
     \\
     \centering
         \begin{subfigure}[b]{0.85\linewidth}
         \centering
         \includegraphics[width=\textwidth]{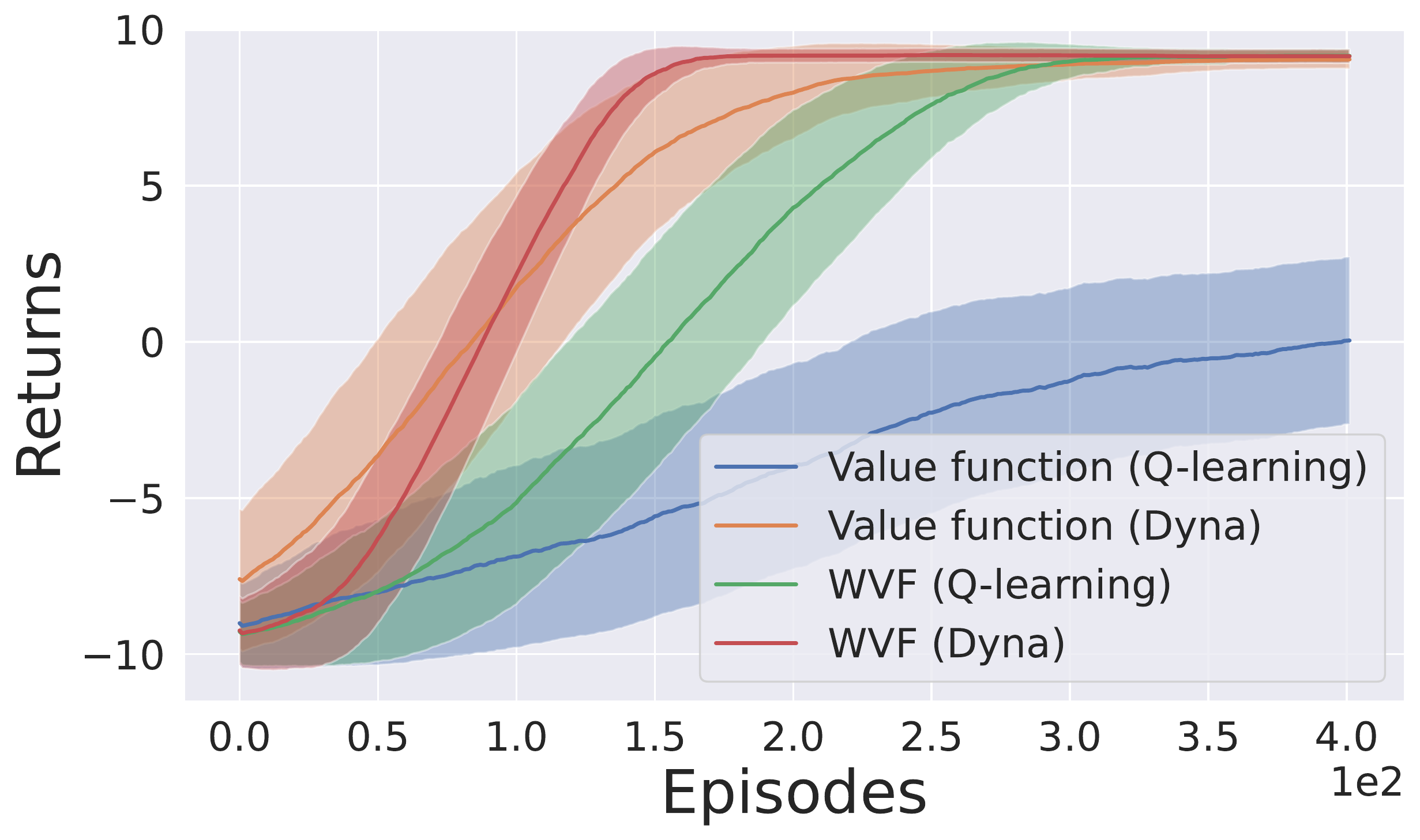}
         \caption{}
         \label{fig:transitions4}
     \end{subfigure}
     \caption{(a--b) Inferred one-step transitions. Red arrows indicate incorrect predictions. (c) Imagined rollouts using the learned WVF. (d) Returns during training for both WVFs and regular value functions, with and without planning. Mean and standard deviation over 25 random seeds are shown.}
     \label{fig:transitions}
\end{figure}


\section{Conclusion}

We introduced a new form of goal-oriented value function that encodes knowledge about how to solve all possible goal-reaching tasks in the world. 
This value function can be learned in a sample efficient manner, and can subsequently be used to infer the dynamics of the environment for model-based planning, or solve new tasks zero-shot given just their terminal rewards. 
An obvious path for future work is to extend these results to the stochastic high-dimensional setting. 
While prior work has demonstrated that WVFs can be learned with neural networks \citep{nangue2020boolean}, planning in high-dimensional environments is still an open challenge; WVFs may provide a promising avenue for unifying both learning and planning in this space.
Overall, our work is a step towards more general agents capable of solving any new task they may encounter.


\bibliography{aaai22}


\appendix

\section{Proofs of theoretical results}
\label{appendix:proofs}

\setcounter{theorem}{0}

\begin{theorem}
Let $M=(\mathcal{\state}, \action, \dynamics, \reward)$ be a deterministic task with optimal action-value function $\qstar$ and optimal world action-value function $\qstarbar$. 
Then for all $(s, a, s^\prime)$ in $\state \times \action \times \state$, we have
\begin{enumerate*}[label=(\roman*)]
    \item $\reward(s, a, s^\prime) = \max\limits_{g \in \goals} \rbar(s, g, a, s^\prime)$, and
    \item $\qstar(s, a) = \max\limits_{g \in \goals} \qstarbar(s, g, a)$.
\end{enumerate*}
\label{thm:1}
\end{theorem}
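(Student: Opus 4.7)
The plan is to prove (i) by direct case analysis on whether $s'$ is absorbing, and to prove (ii) by showing that $f(s,a) := \max_{g\in\goals}\qstarbar(s,g,a)$ satisfies exactly the Bellman optimality equation of $\qstar$, whence $f \equiv \qstar$ by uniqueness of that fixed point.

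For (i), I would split on $s'$. If $s'$ is non-absorbing, the defining condition for the penalty fails for every $g$, so $\rbar(s,g,a,s') = \reward(s,a,s')$ is constant in $g$ and the maximum is immediate. If $s'$ is absorbing, the transition $(s,a,s')$ is terminal, so by the construction of $\goals$ the predecessor state $s$ itself lies in $\goals$. Picking $g=s$ yields $\rbar(s,s,a,s') = \reward(s,a,s')$, while every $g\neq s$ yields $\rbarmin$. Since $\rbarmin$ is chosen strictly below the reward bounds, the maximum is attained at $g=s$ and equals $\reward(s,a,s')$.

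For (ii), because the MDP is deterministic with unique successor $s'$ of $(s,a)$, the optimal WVF satisfies $\qstarbar(s,g,a) = \rbar(s,g,a,s') + \max_{a'}\qstarbar(s',g,a')$ on non-absorbing transitions and $\qstarbar(s,g,a) = \rbar(s,g,a,s')$ on absorbing ones. Taking $\max_g$ on both sides, the absorbing case reduces via (i) to $f(s,a) = \reward(s,a,s') = \qstar(s,a)$. For the non-absorbing case, the crucial observation is that $\rbar(s,g,a,s')$ is independent of $g$ (the penalty condition fails whenever $s'$ is non-absorbing), so it can be pulled outside the $\max_g$, and the remaining $\max_g$ then commutes with $\max_{a'}$, giving $f(s,a) = \reward(s,a,s') + \max_{a'} f(s',a')$. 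Thus $f$ obeys the Bellman optimality recursion for $\qstar$, and uniqueness of the fixed point of that operator (for the terminating, deterministic MDPs at hand) forces $f \equiv \qstar$.

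The main obstacle is the exchange step in (ii): $\max_g$ does not in general commute with the sum of two $g$-dependent quantities produced by the Bellman unroll of $\qstarbar$. The asymmetric definition of $\rbar$ is precisely tuned to resolve this, since outside absorbing transitions the immediate-reward term is $g$-independent and slides freely out of the max, leaving two exchangeable maxes; on absorbing transitions there is simply no future term and part (i) closes the case. Once this decoupling is handled, the remainder is a routine appeal to Bellman fixed-point uniqueness.
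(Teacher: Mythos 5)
Your part (i) is essentially the paper's argument: a case split showing the penalty branch can only be selected when $s'$ is absorbing, in which case $g=s\in\goals$ still attains $\reward(s,a,s')$ and $\rbarmin\leq\rmin$ keeps the penalty from winning the max. Your part (ii), however, takes a genuinely different route. The paper views each $g$ as inducing an auxiliary MDP $M_g$ with reward $\reward_{M_g}=\rbar(\cdot,g,\cdot,\cdot)$, observes via (i) that $\reward_M=\max_g\reward_{M_g}$ with the tasks differing only on terminal transitions, and then imports Corollary 1 of van Niekerk et al.\ to conclude $\qstar_M=\max_g\qstar_{M_g}$. You instead verify directly that $f(s,a)\coloneqq\max_g\qstarbar(s,g,a)$ satisfies the Bellman optimality recursion --- the decoupling you highlight (the immediate reward is $g$-independent off absorbing transitions, so it slides out of the max and the two maxima commute) is exactly the right observation and is valid. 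This buys a self-contained, elementary proof that does not lean on the compositionality literature. The one step you should not wave through is the final appeal to ``uniqueness of the fixed point'': the paper works in the undiscounted episodic setting, where the Bellman optimality operator is not a contraction and its fixed point need not be unique without the standard stochastic-shortest-path hypotheses (existence of a proper policy and unboundedly negative return for improper ones). Either cite that result explicitly, or replace the uniqueness appeal with a direct two-sided bound: $\reward_{M_g}\leq\reward_M$ pointwise gives $f\leq\qstar$, while choosing $g$ to be the terminal state actually reached by an optimal policy of $M$ from $(s,a)$ gives $f\geq\qstar$, since that policy incurs no penalty in $M_g$. With that repair the argument is complete.
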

\begin{proof}
~
\begin{description}[align=right,leftmargin=*,labelindent=\widthof{(ii):}]
    \item [(i):]
        \begin{align*}
        &\max\limits_{g \in \goals} \rbar_M(s, g, a, s^\prime) \\
        &= 
            \begin{cases}
                    \max\{\rbarmin, \reward_{M}(s, a, s^\prime)\}, & \text{if } s \in \goals \\
                    \max\limits_{g \in \goals} \reward_M(s,a, s^\prime), & \text{otherwise.}
            \end{cases} \\
        &= \reward_{M}(s, a, s^\prime) \\
        &\text{(} \rbarmin \leq \rmin \leq \reward_{M}(s, a, s^\prime) \text{ by definition)}.
    \end{align*}
    \item [(ii):] Each $g$ in $\goals$ can be thought of as defining an MDP $M_g \vcentcolon = (\state, \action, \dynamics, \reward_{M_g})$ with reward function $\reward_{M_g}(s, a, s^\prime) \vcentcolon = \rbar_M(s, g, a, s^\prime)$ and optimal action-value function $\qstar_{M_g}(s, a) = \qstarbar_M(s, g, a)$. Then using (i) we have $\reward_M(s, a, s^\prime) =  \max\limits_{g \in \goals} \reward_{M_g}(s, a, s^\prime)$ and from \citet[Corollary 1]{vanniekerk19}, we have that $\qstar_M(s, a) =  \max\limits_{g \in \goals} \qstar_{M_g}(s, a) =  \max\limits_{g \in \goals} \qstarbar_M(s, g, a)$.
\end{description}
\end{proof}

\begin{theorem}
\label{theorem:2}
Let $\qstarbar$ be the optimal world action-value function for a task $M$. Then $\qstarbar$ has mastery.
\label{thm:mastery}
\end{theorem}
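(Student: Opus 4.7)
The plan is to establish mastery by exhibiting, for each starting state $s$ and reachable goal $g \neq s$, an optimal world policy that drives the agent from $s$ into $g$. Because the environment is deterministic, $\max_{\pibar} P^{\pibar}_s(s_T = g) \in \{0, 1\}$, so any policy that reaches $g$ with probability one automatically maximizes the reaching probability; Definition~\ref{def:mastery} therefore reduces to exhibiting an optimal world policy that actually terminates at $g$.

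First, I would produce a witness. By reachability there is a finite trajectory $s = s_0, a_0, s_1, \ldots, s_T = g$, and the deterministic policy $\pi_g$ that executes this sequence and then terminates at $g$ never terminates at any state other than $g$. Consequently $\pi_g$ incurs no $\rbarmin$ penalty under $\rbar$; its world return equals its ordinary $\reward$-return along the trajectory and is therefore finite, supplying a concrete lower bound on $\vstarbar(s, g)$.

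Next, I would show that any policy $\pibar$ which does not reach $g$ from $s$ has strictly smaller world value than $\pi_g$. Two sub-cases arise: (a) the trajectory terminates at some $g' \in \goals \setminus \{g\}$, whence the terminal transition out of $g'$ contributes $\rbarmin$ and the return is bounded above by $T' \rmax + \rbarmin$ for the finite length $T'$; and (b) the trajectory never terminates, in which case the undiscounted sum of rewards either diverges to $-\infty$ (as in the Four Rooms setting, where every per-step reward is strictly negative) or is otherwise dominated by $\pi_g$. The quantitative crux is the choice of $\rbarmin$ prescribed in \cite{nangue2020boolean} as a function of $\rmin$, $\rmax$, and the MDP diameter, which is negative enough that the upper bound in (a) falls strictly below the return of any policy reaching $g$.

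Combining these observations, every world-optimal policy from $s$ must traverse a trajectory terminating at $g$; hence any such $\pibarstar$ satisfies $P^{\pibarstar}_s(s_T = g) = 1 = \max_{\pibar} P^{\pibar}_s(s_T = g)$, which is exactly the mastery condition. The main obstacle I foresee is the non-terminating sub-case (b) together with the quantitative bookkeeping for $\rbarmin$; rather than re-derive the numerical bound, I would defer to the construction from prior work and simply import its consequence that $\rbarmin$ dominates every alternative terminal return.
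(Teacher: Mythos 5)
Your proposal follows essentially the same route as the paper: exhibit a witness policy that reaches $g$ and terminates there without incurring $\rbarmin$, then argue that any policy terminating at some $g' \neq g$ picks up the $\rbarmin$ penalty and is therefore strictly dominated, so the optimal world policy must terminate at $g$. The one step you defer --- the quantitative comparison showing $\rbarmin = (\rmin - \rmax)D$ is negative enough --- is precisely the chain of inequalities the paper carries out explicitly (bounding the pre-terminal return of a proper optimal policy by $\rmax D$ and deriving a contradiction), so importing that bound from \citet{nangue2020boolean} is consistent with the paper's argument rather than a gap in it.
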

\begin{proof}
Let each $g$ in $\goals$ define an MDP $M_{g}$ with reward function
\[
\reward_{M_g} \vcentcolon = \rbar_{M}(s,g,a,s^\prime)
\]
for all $(s,a,s^\prime)$ in $\state \times \action \times \state$. Define
\[ 
\pistar_g(s) \in \argmax\limits_{a \in \action} \qstar_{M, g}(s,a) \text{ for all } s \in \state. 
\]
If $g$ \textit{is} reachable from $s \in \state \setminus \{g\}$, then we show that following $\pistar_g$ must reach $g$.
Since $\pistar_g$ is proper, it must reach a state $g^\prime \in \goals$ such that the transition $(g^\prime, \pistar_g(g^\prime),s^\prime)$ is terminal.
Assume $g^\prime \neq g$. 
Let $\pi_g$ be a policy that produces the shortest trajectory to $g$. 
Let $G^{\pistar_g}$ and $G^{\pi_g}$ be the returns for the respective policies. Then,
\begin{align*}
    &G^{\pistar_g} \geq G^{\pi_g} \\
    &\implies G^{\pistar_g}_{T-1} + \reward_{M_g}(g^\prime, \pistar_g(g^\prime),s^\prime) \geq G^{\pi_g}, \\ 
    &\text{ where } G^{\pistar_g}_{T-1} = \sum_{t=0}^{T-1}\reward_{M_g}(s_t,\pistar_g(s_t),s_{t+1}) \\
    &\text{ and } T \text{ is the time at which } g^\prime \text{ is reached.}     \\
    &\implies G^{\pistar_g}_{T-1} + \rbarmin \geq G^{\pi_g}, \text{ since } g \neq g^\prime \in \goals \\
    &\implies \rbarmin \geq G^{\pi_g} - G^{\pistar_g}_{T-1} \\
    &\implies (\rmin - \rmax)D \geq G^{\pi_g} - G^{\pistar_g}_{T-1} , \\
    &\quad \text{ by definition of } \rbarmin \\
    &\implies G^{\pistar_g}_{T-1} - \rmax D \geq G^{\pi_g} - \rmin D, \\
    &\quad \text{ since } G^{\pi_g} \geq \rmin D \\
    &\implies G^{\pistar_g}_{T-1} - \rmax D \geq 0 \\
    &\implies G^{\pistar_g}_{T-1} \geq \rmax D.
\end{align*}
But this is a contradiction, since the result obtained by following an optimal trajectory up to a terminal state without the reward for entering the terminal state must be strictly less than receiving $\rmax$ for every step of the longest possible optimal trajectory. 
Hence we must have $g^\prime = g$. 

\end{proof}

\newpage

\begin{theorem} 
Let $\goalq$ be the set of optimal world $\bar{Q}$-value functions with mastery of tasks in $\tasks$. Then for all $s \neq g \in \state\times\goals$,
\[
\pibarstar(s,g) \in \argmax\limits_{a \in \action}\qstarbar_{M_1}(s, g, a) 
\]
\[
\iff 
\]
\[
\pibarstar(s,g) \in \argmax\limits_{a \in \action}\qstarbar_{M_2}(s, g, a) ~ \forall M_1, M_2 \in \tasks.
\]
\label{thm:pi1_e_pi2}
\end{theorem}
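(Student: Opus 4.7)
The plan is to reduce the statement to the observation that, across any two tasks $M_1, M_2 \in \tasks$, the two optimal WVFs $\qstarbar_{M_1}(s, g, \cdot)$ and $\qstarbar_{M_2}(s, g, \cdot)$ differ only by a quantity that does not depend on the action argument $a$, so their $\argmax_a$ sets coincide, which is exactly the claimed biconditional.

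First I would invoke the shared structure of $\tasks$: every $M \in \tasks$ is built on the same background MDP $M_0$, so the dynamics $\dynamics_0$ and the background reward $\reward_0$ are identical across tasks, and tasks differ only in the terminal component $\reward_M^\tau(s,a)$, which is nonzero only on transitions entering terminal states. Fix $s \neq g$ in $\state \times \goals$ with $g$ reachable from $s$. Theorem~\ref{thm:mastery} guarantees that every policy in $\argmax_{a}\qstarbar_M(s,g,a)$ extends to an optimal world policy that actually reaches $g$ from $s$; along such a trajectory no $\rbarmin$ penalty is ever collected, and the only task-specific contribution is the terminal reward picked up at $g$.

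Second I would formalise the decomposition. Because any optimal trajectory from $s$ through $a$ to $g$ collects only the shared background rewards until it enters $g$, and then the world policy at $g$ selects the terminal action that maximises $\reward_M^\tau(g,\cdot)$, we obtain
\[
\qstarbar_M(s,g,a) \;=\; \gstar \;+\; \max_{a' \in \action}\reward_M^\tau(g, a'),
\]
where $\gstar$ is the maximum cumulative background reward over all paths starting with action $a$ at $s$ and terminating upon first entering $g$. Crucially, $\gstar$ depends on $M_0$, $s$, $g$, $a$ alone and is therefore identical for $M_1$ and $M_2$, while the second summand depends on $M$ and $g$ but not on $a$.

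Taking $\argmax_a$ on both sides eliminates the $a$-independent summand, giving
\[
\argmax_{a \in \action}\qstarbar_{M_1}(s,g,a) \;=\; \argmax_{a \in \action}\gstar \;=\; \argmax_{a \in \action}\qstarbar_{M_2}(s,g,a),
\]
and the forward and backward implications both follow at once. The main obstacle is rigorously justifying the decomposition: one must use mastery (Theorem~\ref{thm:mastery}) to ensure the optimal trajectory actually reaches $g$ so no $\rbarmin$ terms intrude, and then use the ``background-only on non-terminal transitions'' structure of $\tasks$ to peel off a task-independent $\gstar$ from a task-specific constant. A minor subtlety worth a line is the case where $g$ is unreachable from $s$; here every proper policy incurs the same $\rbarmin$ terminal penalty and an analogous (task-independent) decomposition applies.
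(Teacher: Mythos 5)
Your overall strategy matches the paper's: split on whether $g$ is reachable from $s$, handle the reachable case via mastery (Theorem~\ref{thm:mastery}) and the shared background structure of $\tasks$, and handle the unreachable case by observing that $\rbar_{M_1}$ and $\rbar_{M_2}$ coincide along every trajectory from $s$ (terminal transitions all pay $\rbarmin$, non-terminal ones pay the shared background reward), so the two $\qstarbar$'s are literally equal there. In the reachable case you are in fact more explicit than the paper, which disposes of it in one line by citing mastery; your decomposition $\qstarbar_M(s,g,a) = \gstar + \max_{a'}\reward_M^\tau(g,a')$ is essentially the identity the paper only states informally after the theorem and then reuses in Theorem~\ref{thm:R_WVF}, and deriving it directly from mastery (rather than from this theorem) keeps your argument non-circular.

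One wrinkle you should patch: the displayed decomposition does not hold for \emph{every} $a\in\action$, only for actions after which $g$ remains reachable. For an action that severs reachability of $g$ --- most obviously the ``done'' action at $s\neq g$, which yields $\qstarbar_M(s,g,a)=\rbarmin$, or an action leading into a region from which $g$ cannot be reached --- the value is a different (though still task-independent) quantity, so the step ``taking $\argmax_a$ eliminates the $a$-independent summand'' is comparing $\gstar + C_M(g)$ against values of a different form, and the constant $C_M(g)$ could in principle tip the comparison differently in $M_1$ and $M_2$. The fix is exactly the tool you already have in hand: mastery of \emph{both} $\qstarbar_{M_1}$ and $\qstarbar_{M_2}$ guarantees that every greedy action at $(s,g)$ preserves reachability of $g$, so both argmax sets live entirely inside the reachability-preserving actions, where your cancellation argument is valid. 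Stating that partition of $\action$ explicitly closes the gap.
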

\begin{proof}
Let $g \in \goals, s \in \state \setminus \{g\}$.

If $g$ \textit{is} reachable from $s$, then we are done since $\qstarbar_{M_1}$ and $\qstarbar_{M_2}$ have mastery (Theorem~\ref{thm:mastery}).

If $g$ is unreachable from $s$, then for all $(a, s^\prime)$ in $\action \times \state$ we have 

\begin{align*}
    \rbar_{M_1}(s, g, a, s^\prime) &= \begin{cases}
    \rbarmin, &\text{ if } s^\prime \text{ is absorbing } \\
    r_{s,a, s^\prime}, &\text{ otherwise}
    \end{cases}\\
    &\text{ where } r_{s,a, s^\prime} \text{ is the reward for the}\\
    &\text{ non-terminal transition } (s,a, s^\prime)\\
    &= \rbar_{M_2}(s, g, a, s^\prime) \\
    &\implies \qstarbar_{M_1}(s, g, a) = \qstarbar_{M_2}(s, g, a).
\end{align*}

\end{proof}

\begin{theorem}
Let $\reward_M^\tau$ be the given task-specific reward function for a task $M\in\tasks$, and let $\qstarbar\in\goalq$ be an arbitrary WVF. Let $\vstarbara_M(s,g)$ be the estimated WVF of $M$ given by 
\[
\max\limits_{a\in\action}\qstarbar(s,g,a) + \left(\max\limits_{a\in\action}\reward_M^\tau(g,a)-\max\limits_{a\in\action}\qstarbar(g,g,a)\right).
\]
Then,
\begin{enumerate}[label=(\roman*)]
    \item for all $g\in\goals$ reachable from $s\in\state$, $\vstarbar_M(s,g) = \vstarbara_M(s,g)$.
    \item $\vstar_M(s) = \max\limits_{g\in\goals} \vstarbara(s,g)$, and $\pistar_M(s) \in \argmax\limits_{a\in\action}\qstarbar(s,\argmax\limits_{g\in\goals} \vstarbara_M(s,g),a)$.
\end{enumerate}

\label{thm:R_WVF}
\end{theorem}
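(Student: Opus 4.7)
The plan is to leverage the task-agnostic structural decomposition stated just before the theorem, namely that an optimal WVF on $\tasks$ factorises as $\qstarbar_M(s,g,a) = \gstar + \rbar_M^\tau(g,a^\star)$, where $G^*_{s,g,a}$ is the sum of non-terminal rewards along the optimal path from $(s,a)$ to $g$ and $a^\star$ is the optimal terminal action at $g$. Because every task in $\tasks$ shares the same background MDP and, by mastery (Theorem~\ref{thm:mastery}) combined with Theorem~\ref{thm:pi1_e_pi2}, the same world policy reaches $g$, the path return $G^*_{s,g,a}$ is identical across tasks; only the terminal reward at $g$ varies. The correction term in the definition of $\vstarbara_M$ is engineered precisely to swap out the old task's terminal reward $\max_a\qstarbar(g,g,a)$ for the new task's $\max_a\reward_M^\tau(g,a)$.

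For part (i), I would first take the max over the initial action $a$ in the decomposition. The terminal reward $\rbar_M^\tau(g,a^\star)$ depends only on $g$ and the terminal action, not on the initial $a$, so it factors out, giving $\vstarbar_M(s,g) = \max_a G^*_{s,g,a} + \max_a \reward_M^\tau(g,a)$. Applying the same identity to the arbitrary WVF yields $\vstarbar(s,g) = \max_a G^*_{s,g,a} + \max_a \reward^\tau(g,a)$ for its underlying terminal rewards $\reward^\tau$, and specialising to $s=g$ (where no non-terminal steps are needed, so $\max_a G^*_{g,g,a}=0$) gives $\vstarbar(g,g) = \max_a \reward^\tau(g,a)$. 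Subtracting $\vstarbar(g,g)$ and re-adding the new task's terminal reward, exactly as in the definition of $\vstarbara_M$, collapses the task-specific terms and produces $\max_a G^*_{s,g,a} + \max_a \reward_M^\tau(g,a) = \vstarbar_M(s,g)$.

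For part (ii), I would begin with Theorem~\ref{thm:1} and swap the order of maxima to obtain $\vstar_M(s) = \max_a\qstar_M(s,a) = \max_g \vstarbar_M(s,g)$. The next step is to argue that this outer maximum is attained at some goal reachable from $s$: for unreachable $g$, the $\rbarmin$ penalty in $\rbar$, via the same contradiction used in the proof of Theorem~\ref{thm:mastery}, drives $\vstarbar_M(s,g)$ well below any value attainable at a reachable goal, so the argmax lies in the reachable set, where part (i) gives $\vstarbar_M(s,g) = \vstarbara_M(s,g)$. A symmetric bound on $\vstarbara_M$ at unreachable $g$ (inherited from the corresponding bound on $\vstarbar$) closes the equality $\vstar_M(s) = \max_g\vstarbara_M(s,g)$. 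For the policy claim, once $g^\star \in \argmax_g \vstarbara_M(s,g)$ has been selected, the decomposition shows $\qstarbar_M(s,g^\star,a)$ and $\qstarbar(s,g^\star,a)$ differ only by a constant in $a$, so their argmaxes coincide; and since $\max_a\qstarbar_M(s,g^\star,a) = \vstar_M(s)$, that common argmax is contained in $\argmax_a \qstar_M(s,a)$.

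The hard part will be the reachability step in (ii): part (i) only applies when $g$ is reachable from $s$, and the auxiliary WVF $\qstarbar$ used to build $\vstarbara_M$ may come from a task whose structure of reachable goals does not obviously align with that of $M$. I expect the cleanest route to be the quantitative bound $\rbarmin \leq (\rmin - \rmax)D$ from the proof of Theorem~\ref{thm:mastery}, which produces sharp enough gaps between reachable and unreachable goals in $\vstarbar$, $\vstarbar_M$, and $\vstarbara_M$ simultaneously to force the outer argmax in each case into the reachable set, where the decomposition does all of the remaining work.
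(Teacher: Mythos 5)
Your proposal is correct and follows essentially the same route as the paper: part (i) rests on the decomposition $\qstarbar_M(s,g,a) = \gstar + \rbar_M^\tau(g,a^{\max})$ (via Theorems~\ref{thm:mastery} and~\ref{thm:pi1_e_pi2}), with the correction term swapping the old terminal reward $\max_a\qstarbar(g,g,a)$ for the new one. For part (ii) the paper merely asserts that it ``follows directly from (i) and Theorem~\ref{thm:pi1_e_pi2}''; your explicit reachability argument --- using the $\rbarmin \leq (\rmin-\rmax)D$ gap to force the argmax over goals into the reachable set where (i) applies --- fills in exactly the step the paper leaves implicit, and is a welcome addition rather than a deviation.
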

\begin{proof}
~
\begin{description}[align=right,leftmargin=*,labelindent=\widthof{(ii):}]
    \item [(i):] Let $g \in \goals$ be a goal reachable from state $s \in \state$. If $g=s$, then
\begin{align*}
    &\max\limits_{a\in\action}\qstarbar(s,g,a) + \left(\max\limits_{a\in\action}\reward_M^\tau(g,a)-\max\limits_{a\in\action}\qstarbar(g,g,a)\right) \\
    &= \max\limits_{a\in\action}\reward^\tau(g,a) + \left(\max\limits_{a\in\action}\reward_M^\tau(g,a)-\max\limits_{a\in\action}\reward^\tau(g,a)\right) \\
    &= \max\limits_{a\in\action}\reward_M^\tau(g,a) = \vstarbar_M(s,g) \\
\end{align*}

If $g\neq s$, then
\begin{align*}
    &\max\limits_{a\in\action}\qstarbar(s,g,a) + \left(\max\limits_{a\in\action}\reward_M^\tau(g,a)-\max\limits_{a\in\action}\qstarbar(g,g,a)\right) \\
    &= \max\limits_{a\in\action}[\gstar + \reward^\tau(g, a^{\max})] + \\ 
    &\quad \quad \quad \quad 
    \left(\max\limits_{a\in\action}\reward_M^\tau(g,a)-\max\limits_{a\in\action}\reward^\tau(g,a)\right), \\
    &\text{ follows from Theorem~\ref{thm:mastery} and Theorem~\ref{thm:pi1_e_pi2}} \\
    &= \max\limits_{a\in\action}\gstar + \reward^\tau(g, a^{\max}) + \\
    &\quad \quad \quad \quad
    \left(\reward_M^\tau(g,a_M^{\max})-\reward^\tau(g,a^{\max})\right) \\
    &= \max\limits_{a\in\action}[\gstar + \rbar_M^\tau(g, a_M^{\max})] = \vstarbar_M(s,g) \\
\end{align*}

    \item [(ii):]Follows directly from (i) above and Theorem~\ref{thm:pi1_e_pi2}.
\end{description}
\end{proof}

\section{Algorithms}

\SetAlgoNoLine
\begin{algorithm}
\DontPrintSemicolon
    \SetKwInOut{Initialise}{Initialise}
 \Initialise{ WVF $\qbar$, Reward function $\reward$, goal buffer $\goals$, learning rate $\alpha$ \;}
\ForEach{episode}{
    Observe initial state $s\in\state$ and sample $g \in \goals$\; 
   \While{episode is not done}{
    $a \gets 
    \begin{cases}
    \argmax\limits_{a \in \action} \qbar(s, g, a) & \mbox{w.p.  } 1 - \varepsilon  \\
    \text{a random action} & \mbox{w.p. } \varepsilon 
    \end{cases}$ \;
   Execute $a$, observe reward $r$ and next state $s^\prime$ \;
    $\reward(s,a, .) \gets r$ \;
   \textbf{if} \textit{$s^\prime$ is absorbing} \textbf{then}  $\goals \leftarrow \goals \cup \{s\}$ \;
   \For{$g^\prime\in\goals$}{
    $\bar{r} \gets \rbarmin$ \textbf{if} $g^\prime \neq s$ and $s \in \goals$ \textbf{else} $r$ \;
    $\delta \gets \left[ \bar{r} + \max\limits_{a^\prime} \qbar(s^\prime, g^\prime, a^\prime) \right] - \qbar(s, g^\prime, a)$\;
    $\qbar(s, g^\prime, a) \gets \qbar(s, g^\prime, a) + \alpha \delta$\;
    }
    
   \RepTimes{$N$}{
    $s \gets$ random previous state \;
    $a \gets$ random previous action taken in $s$ \;
    $r \gets \reward(s, a, .)$ \;
    $s^\prime \gets $ Solving $\mathcal{N}(s)$ Bellman equations \;
    $MSE \gets \frac{1}{|\mathcal{N}(s)|}\sum_{g\in\mathcal{N}(s)}(\qbar(s,g,a)-$ \;
    \quad \quad \quad \quad 
    $\left[ \rbar(s,g,a,s^\prime) + \vbar(s^\prime,g) \right])$ \;
    \If{$MSE \leq$ threshold}{
      \For{$g^\prime\in\goals$}{
            $\bar{r} \gets \rbarmin$ \textbf{if} $g^\prime \neq s$ and $s \in \goals$ \textbf{else} $r$ \;
            $\delta \gets \left[ \bar{r} + \max\limits_{a^\prime} \qbar(s^\prime, g^\prime, a^\prime) \right] - \qbar(s, g^\prime, a)$\;
            $\qbar(s, g^\prime, a) \gets \qbar(s, g^\prime, a) + \alpha \delta$\;
        }
    }
    }
    $s \leftarrow s^\prime$
   }
 }
 \caption{Dyna for WVFs using inferred transition functions}
 \label{alg:dyna}
\end{algorithm}

\end{document}